\definecolor{Prune}{RGB}{99,0,60}
\newcommand\alphat{\Tilde{\alpha}}
\newcommand\Xtest{\boldsymbol{X}_{n+1}}
\newcommand\X{\boldsymbol{X}}
\newcommand\x{\boldsymbol{x}}
\newcommand\Xd{\boldsymbol{X}^\diamond}
\newcommand\cov{\text{cov}}
\newcommand\hV{\widehat{V}}
\newcommand{\labs}{\left\lvert}
\newcommand{\rabs}{\right\rvert}
\theoremstyle{plain}
\newtheorem{theorem}{Theorem}[section]
\newtheorem{lemma}[theorem]{Lemma}
\theoremstyle{definition}
\newtheorem{assumption}[theorem]{Assumption}
\theoremstyle{remark}
\title{Adaptive Conformal Prediction by Reweighting Nonconformity Scores}
\author{%
  Salim I.~Amoukou \\
  LaMME\\
  University Paris Saclay\\
  Stellantis Paris\\
 \And
   Nicolas J-B.~Brunel \\
  LaMME  \\
  ENSIIE, University Paris Saclay\\
  Quantmetry Paris \\
}
\begin{document}

\maketitle

\begin{abstract}
Despite attractive theoretical guarantees and practical successes, Predictive Interval (PI) given by Conformal Prediction (CP) may not reflect the uncertainty of a given model. This limitation arises from CP methods using a constant correction for all test points, disregarding their individual uncertainties, to ensure coverage properties. To address this issue, we propose using a Quantile Regression Forest (QRF) to learn the distribution of nonconformity scores and utilizing the QRF's weights to assign more importance to samples with residuals similar to the test point. This approach results in PI lengths that are more aligned with the model's uncertainty. In addition, 
the weights learnt by the QRF provide a partition of the features space, allowing for more efficient computations and improved adaptiveness of the PI through groupwise conformalization. Our approach enjoys an assumption-free finite sample marginal and training-conditional coverage, and under suitable assumptions, it also ensures conditional coverage. Our methods work for any nonconformity score and are available as a \href{https://github.com/salimamoukou/ACPI}{Python package}. We conduct experiments on simulated and real-world data that show significant improvements compared to existing methods.
\end{abstract}

\addtocontents{toc}{\protect\setcounter{tocdepth}{0}}
\section{Motivations}
Machine learning techniques offer single point predictions, such as mean estimates for regression and class labels for classification, without providing any indication of uncertainty or reliability. This can be a major concern in high-stakes applications where precision is vital.

Consider a training set $\small \mathcal{D}_m = \{(\X_i, Y_i)\}_{i=1}^{m}$ with $\small (\X_i, Y_i) \in \mathcal{X} \times \mathbb{R}$ drawn exchangeably from $\small P= P_{\X} P_{Y \vert \X}$, and an algorithm $\small \mathcal{A}$ that gives conditional mean or quantile estimate $\small \mathcal{A}(\mathcal{D}_m) = \widehat{\mu}(\cdot)$. We consider the problem of constructing a predictive set $C(\cdot)$ for the unseen response $\small Y_{n+1}$ given a new feature $\small \Xtest$. Conformal Prediction is a universal framework that constructs a prediction interval $\small C(\Xtest)$ that covert $\small Y_{n+1}$ with finite-sample (non-asymptotic) coverage guarantee without any assumption on $\small P$ and $\small \widehat{\mu}$. CP methods can be broadly divided into two categories: those that involve retraining the model multiple times, such as full conformal \citep{vovk2005algorithmic} or jackknife methods \citep{barber2021predictive}, and those that use sample splitting, known as split conformal methods \citep{Papadopoulos2002InductiveCM, Lei2016DistributionFreePI}. The latter is more computationally feasible at the cost of splitting the data. In this paper, we consider the split conformal approach (split-CP).

The foundation of the PI of the CP framework is the nonconformity score $\small \hV(\X, Y)$ that represents the error of the model $\small \widehat{\mu}$ on $(\X, Y)$. Given a calibration set $\small \mathcal{D}_n = \{(\X_i, Y_i)\}_{i=1}^{n}$ independent of the training set $\small \mathcal{D}_m = \{ (\X_i, Y_i)\}_{i=1}^{m}$, and the scores $\small \hV_i := \hV(\X_i, Y_i)$ for all $i \in \mathcal{D}_n$, the PI of $\small \Xtest$ at level $1-\alpha$ given by the split-CP is:
\begin{align} \label{chap7:eq:predictive_set}
    \small C(\Xtest)= \left\{y\in \mathbb{R}:  \hV(\Xtest,y)\leq \mathcal{Q}\left(1-\alpha; \; \widehat{F}_{n+1}\right) \right\},
\end{align}
where $\small \mathcal{Q}(1-\alpha; \;F)$ denotes the $\small (1-\alpha)$-quantile of any cumulative distribution function (c.d.f) $F$, and $\small \widehat{F}_{n+1}(\cdot)$ is the empirical c.d.f of the samples $\small \hV_{1:n} \cup \infty$ defined as $\small \widehat{F}_{n+1}(r) = \sum_{i=1}^n \frac{1}{n+1} \mathds{1}_{\hV_i\leq r} + \frac{1}{n+1} \mathds{1}_{\infty \leq r}$.
By exchangeability of the $n+1$ data points $ (\X_1, Y_1),\dots, (\X_n, Y_n), (\X_{n+1}, Y_{n+1}) $, we can show that the PI has marginal coverage, i.e.,
\begin{align*}
    \small P^{n+1}\left(Y_{n+1} \in C(\Xtest)\right) \geq 1 - \alpha.
\end{align*}
$P^{n+1}$ denotes that the probability is taken with respect to the $\small n+1$ data points and $\small \alpha \in (0, 1)$ is a predefined miscoverage rate. However, despite the marginal guarantees, split-CP cannot represent the variability of the model's uncertainty given $\small \Xtest$. Indeed, it constructs the PI of future test points $\small \Xtest$ through the uniform distribution over the calibration residuals $\small \widehat{F}_{n+1}(\cdot)$  that treat all the calibration residuals as the same regardless of $\small \Xtest$. To better illustrate the issue, consider a simple example where the true distribution of $Y$ is homoskedastic, meaning that $\small Y = \mu(\X) + \epsilon$, where $\small \X$ and $\small \epsilon$ are independent.  In this case, the true residuals of the calibration samples $\small V_i:= V(\X_i, Y_i) = |Y_i - \mu(\X_i)| = |\epsilon|$ are independent of $\small \X_i$ and  $\small V_i \sim |\epsilon|$ for $i \in \mathcal{D}_n$. Hence, we have $\small F_{V}(\cdot) = F_{V|\X=\x}(\cdot)$. However, in practice, we only have the estimated residuals, $\small \hV_i:= \hV(\X_i, Y_i) = |Y_i - \widehat{\mu}(\X_i)| = |\mu(\X_i) - \widehat{\mu}(\X_i) + \epsilon|$, which do depend on $\small \X_i$ as the accuracy of $\small \widehat{\mu}$ can vary for different $\X_i$. For example, if $\small \X_i$ is in a high density region with a large amount of data, $\small \widehat{\mu}$ is likely to be more accurate, while in a low density region with a small amount of data, $\small \widehat{\mu}$ is likely to be less accurate. In contrast of the true residual, the conditional law of the estimated residuals $\small \hV |\X = \x$ is not equal to the marginal law of $\small \hV$, thus using the latter $\small F_{\hV}(\cdot)$ as in split-CP to construct the PI of a given observation $\small \x$ may produce under/over coverage PI as $\small \mathcal{Q}\left(1-\alpha;\; F_{\hV}\right)$ may be greater or lower than $\small \mathcal{Q}\left(1-\alpha;\; F_{\hV|\X = \x}\right)$.

Our goal is to construct Prediction Intervals (PIs) with valid coverage for the model of interest $\small \widehat{\mu}$, while adjusting the width of the intervals to help visualize and understand the source of uncertainty of the model $\small \widehat{\mu}$. In fact, the split-CP uses a constant correction term $\small \mathcal{Q}\left(1-\alpha; \; \widehat{F}_{n+1}\right)$ for all test samples, while we aim to have an adaptive correction term that depends on the specific test observation $\Xtest$. To achieve this, we propose to directly estimate the conditional distribution of the nonconformity score given $\X_{n+1}$ by re-weighting the distribution $\small \widehat{F}_{n+1}(\cdot)$ in order to favor the residuals $\{\hV_i\}_{i \in \mathcal{D}_n}$ closer to the residual of $\X_{n+1}$. In Figure \ref{chap7:fig:first_figure}, we show the correction terms of split-CP, our method, and the true error of the model $\small \widehat{\mu}$ computed on california house price dataset \citep{california_data}. It shows that our corrections are more aligned with the true error of the model. 

\begin{figure}[ht!]
    \centering
    \subfigure{\includegraphics[width=0.3\textwidth]{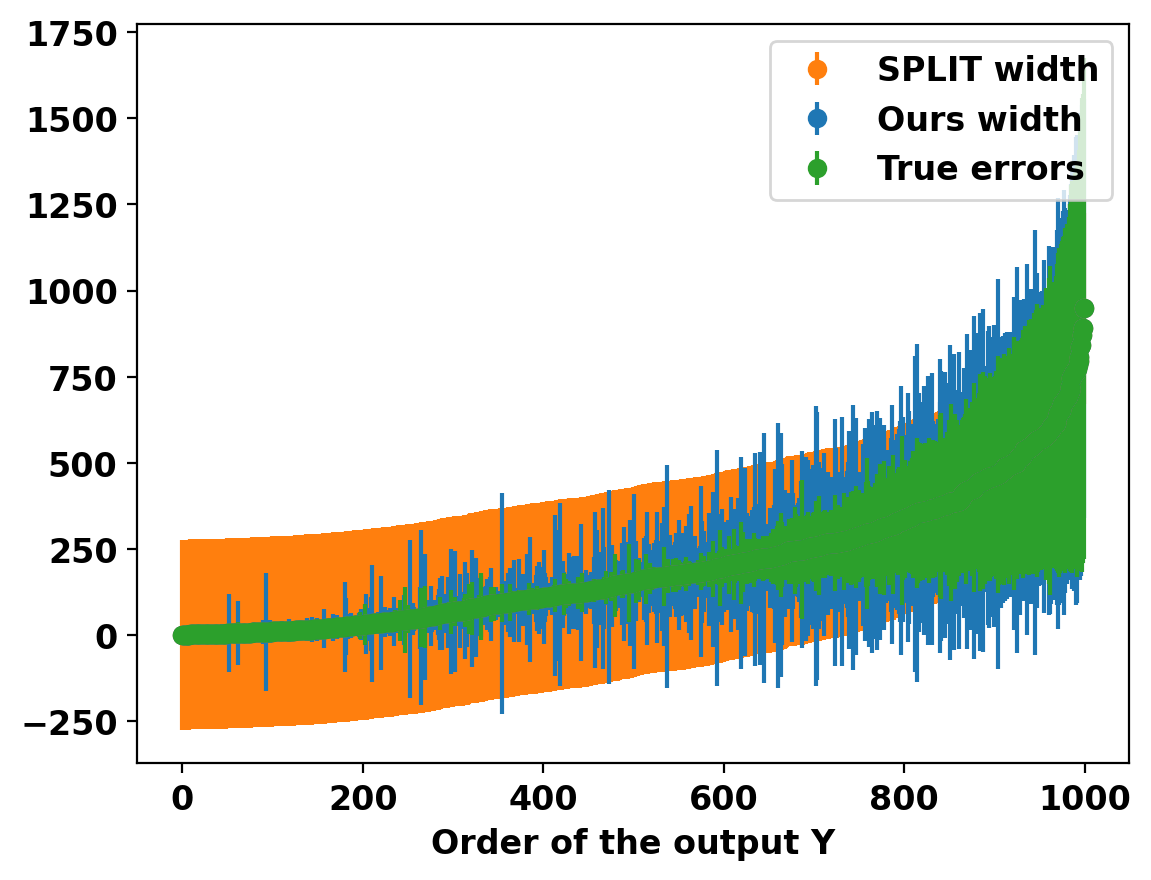}}
\subfigure{\includegraphics[width=0.3\textwidth]{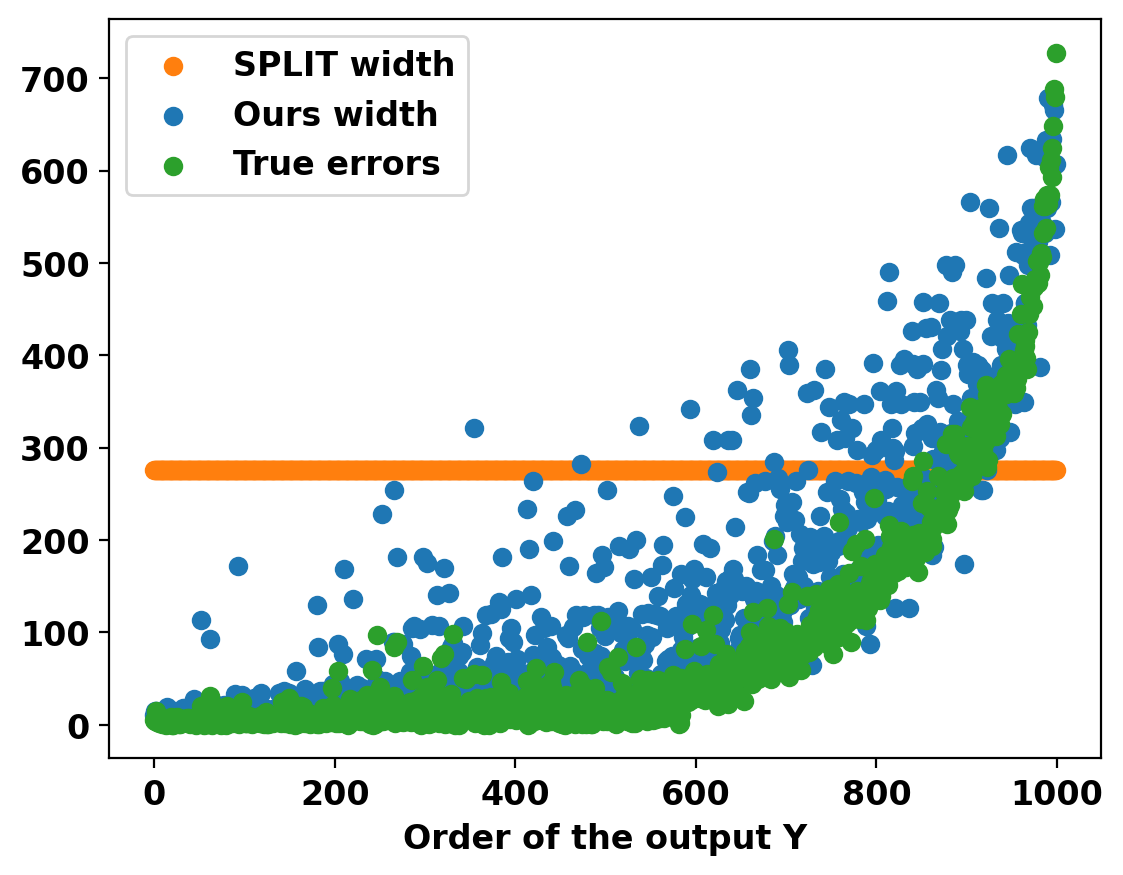}}
    \caption{Correction terms of SPLIT, Ours, and the true error on california house price  dataset}
    \label{chap7:fig:first_figure}
\end{figure}

We also aim to give PI with stronger coverage guarantee. Indeed, in practical applications, what is of interest is the coverage rate on future test points based on a given calibration set. However, the marginal coverage does not address this concern. It only bounds the coverage rate on average over all possible sets of calibration and test observations. In contrast, the training-conditional coverage ensures that with  probability $1-\delta$ over the calibration samples $\mathcal{D}_n$, the resulting coverage on future test observation is still above $1 - \alpha$. Formally, 
\begin{align*}
     \small P^n \left( P\left(Y_{n+1} \in C(X_{n+1})  \;|\;\mathcal{D}_n\right) \geq 1 - \alpha \right) \geq 1 - \delta.
\end{align*}
This style of guarantee is also known as  “Probably Approximately Correct” (PAC) predictive interval \citep{valiant1984theory}. The roots of this type of guarantee can be traced back to the earlier works of \citep{wilks1941determination,wald1943extension}. Despite the importance of training-conditional coverage in practice, only a few methods have been proven to achieve it. \cite{vovk2012conditional} was the first to establish this result for split conformal methods, and recently \cite{bian2022training} has shown that the K-fold CV+ method also achieves it. However, no analogous results are currently known for other CP methods, such as jackknife+ \citep{barber2021predictive} and full-conformal \citep{vovk2005algorithmic}. Therefore, we propose a further calibration step such that our proposed adaptive PI also achieves training-conditional coverage.

There is another area of research that focuses on developing CP procedures for conditional coverage $\mathbb{P}(Y_{n+1} \in C(X_{n+1}) \; \vert \; \X_{n+1}=\x ) \geq 1-\alpha $. It is well known that obtaining nontrivial distribution-free conditional coverage is impossible with a finite sample \citep{Lei2014DistributionfreePB, vovk2012conditional}. Consequently, we prove under suitable assumptions that our methods also achieve asymptotic conditional coverage.

\section{Related works and contributions}

For the sake of simplicity, we use the absolute residual as the nonconformity score $\small \hV_i := \hV(\X_i, Y_i) = |Y_i - \widehat{\mu}(\X_i)|$, without loss of generality.  As a result, the best (symmetric) PI that can be constructed with  $\small \widehat{\mu}(\cdot)$ and the score $\small \hV(\cdot)$ is $\small C^\star(\X_{n+1}) = \big[\widehat{\mu}(\X_{n+1}) \pm q^\star_{1-\alpha}(\Xtest)\big]$ where $\small q^\star_{1-\alpha}(\Xtest)$ is the $\small (1 - \alpha)$-quantile of $\small F_{\hV_{n+1} | \Xtest}$.  To construct adaptive PIs, we propose focusing on the estimated residuals of the calibration samples $\small \{ \hV_i\}_{i \in \mathcal{D}_n}$, and approximate the distribution of $\small \hV | \X = \x$ or identify the stable regions $A$ where $\small Var(\hV(\X, Y) \;|\;\X \in A) \approx 0$, which would allow us to isolate the regions where there is high/low uncertainty of the model.

Recently, \cite{guanlocalizer} proposed Localized Conformal Prediction (LCP) and \cite{han2022split} inspired by \citep{lin2021locally} proposed Split Localized Conformal Prediction (SLCP) which uses kernel-based weights $\small w_h(\X_i, x)$ or Nadaraya-Watson (NW) estimator \citep{nadaraya1964estimating} to approximate the conditional c.d.f of $\small\hV | \X = \x$. Both methods differ in how they learn the NW estimator, SLCP uses the training data $\small\mathcal{D}_m$ to learn the estimator $\small\widehat{F}^{(S)}_h(r | \X = \x) = \sum_{i \in \mathcal{D}_m} w_h(\x, \X_i) \mathds{1}_{\hV_i \leq r}$, while LCP uses the calibration data $\small\mathcal{D}_n$ to learn the estimator $\small\widehat{F}^{(L)}_h(r | \X = \x) = \sum_{i \in \mathcal{D}_n} w_h(\x, \X_i) \mathds{1}_{\hV_i \leq r}$. The calibration step of these two methods is also different. The PI of $\Xtest$ given by SLCP is:
\begin{align*}
   \small C^{S}(\Xtest) = \Big[\widehat{\mu}(\Xtest) \pm \mathcal{Q}\left(1-\alpha; \; \widehat{F}^{(S)}_h(\cdot |\X = \Xtest)\right) + \textcolor{red}{\widehat{Q}} \Big]
\end{align*}
where $\small \textcolor{red}{\widehat{Q}}$ is the split-CP correction term to achieve marginal coverage. In constrast, LCP does not use split-CP but instead adapts the threshold $\small \textcolor{red}{\alphat} = 1 - \alpha$ in $\small \mathcal{Q}\left(1-\alpha;\; \widehat{F}^{(L)}_h(\cdot |\Xtest)\right)$ to achieve the marginal coverage. LCP constructs the predictive interval for a new point $\Xtest$ as follows: 
\begin{align*}
     \small C^{L}(\Xtest) = \Big[\widehat{\mu}(\Xtest) \pm \mathcal{Q}\left(\textcolor{red}{\alphat}; \; \widehat{F}^{(L)}_h(\cdot | \X = \Xtest)\right)\Big]
\end{align*}
where $\textcolor{red}{\alphat}$ is chosen to achieve the marginal coverage.
However, while both LCP and SLCP address the problem and guarantee marginal coverage, they have some limitations. A main limitation is that they are based on kernel methods, which are known to be limited in high dimensions due to the curse of dimensionality. Additionally, choosing the appropriate kernel bandwidth can be challenging and it can be difficult to define kernels that handle both categorical and continuous variables. Another limitation of SLCP is that it learns $\small \widehat{F}^{(S)}_h(\cdot | \X = \x)$ on the training data $\small \mathcal{D}_m$, which may result in overfitting and thus the calibration step using split-CP may produce large intervals to attain the marginal coverage. In contrast, LCP learns $\small \widehat{F}^{(L)}_h(\cdot | \X = \x)$ on the calibration data $\mathcal{D}_n$, but the calibration step that consists of finding the adaptive $\alphat$ is computationally costly.

In this work, we propose to replace the Nadaraya-Watson (NW) estimator with the Quantile Regression Forest (QRF) algorithm \citep{meinshausen2006quantile} to estimate the distribution $\hV| \X = \x$ and use the LCP approach to calibrate the PI. The QRF algorithm is an adaptation of the Random Forest (RF) algorithm \citep{breiman1984classification}, which can be seen as an adaptive neighborhood procedure \citep{lin2006random}. It estimates the conditional c.d.f of $\hV | \X = \x$  as $\widehat{F}(r | \X = \x) = \sum_{i} w_n(\x, \X_i) \mathds{1}_{\hV_i \leq r}$ where the weights correspond to the average number of times where $\X_i$ falls in the same leaves of the RF as the observation $\x$.  Unlike kernel-based methods, the weights given by the RF depend on both feature input $\X_i$ and the residual $\hV_i$ due to the splits. We called this approach LCP-RF. This estimator has several advantages over the NW estimator. First, it is known to perform well in practice, even in high dimensions. It can handle both categorical and continuous variables. Additionally, it has interesting theoretical properties in high dimensions; under certain assumptions, it can be shown to be consistent and to adapt to the intrinsic dimension \citep{klusowski2021universal, scornet2015consistency}. To illustrate, we compute the PI of these methods using a random forest fitted on a toy model with input $\X \in [0, 7]^{21}$, and the target defined as $Y = \sin(X_1)^2 + 0.1 + 0.6 \times \epsilon \times \sin(2X_1)$, where $\epsilon \sim \mathcal{N}(0, 1)$, and $X_i \sim \mathcal{U}(0, 7)$ for all $i \in[21]$. As seen in Figure \ref{chap7:fig:toy_demo1}, the competitors LCP and SLCP fail to perform even on this very simple example with 1 active and 20 noise features, while our method benefits from the power of the Random Forest algorithm on tabular data \citep{grinsztajn2022tree}.

\begin{figure*}[ht!]
\centering
\subfigure{\includegraphics[width=0.3\textwidth]{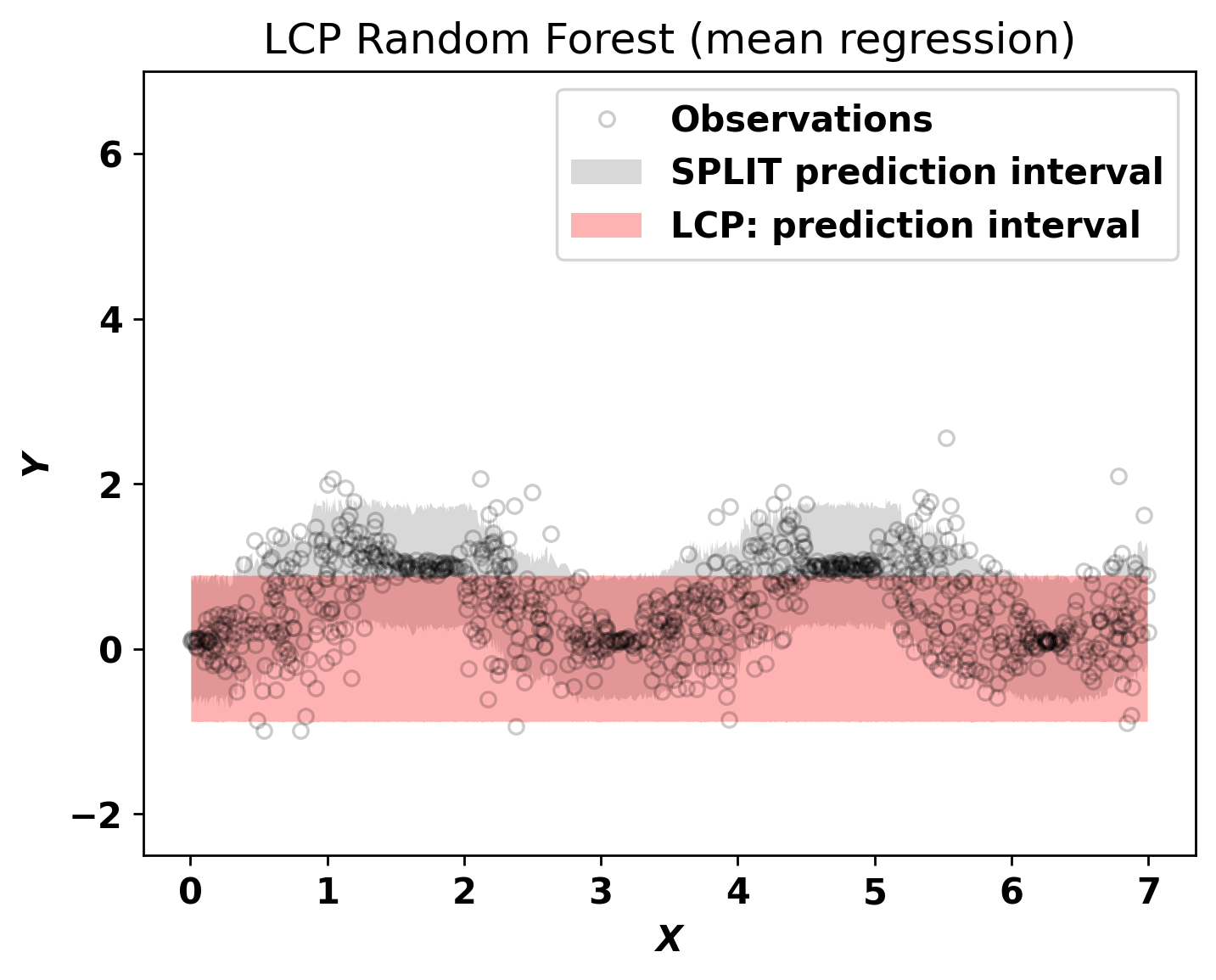}\label{chap7:fig:lcp}}
\subfigure{\includegraphics[width=0.3\textwidth]{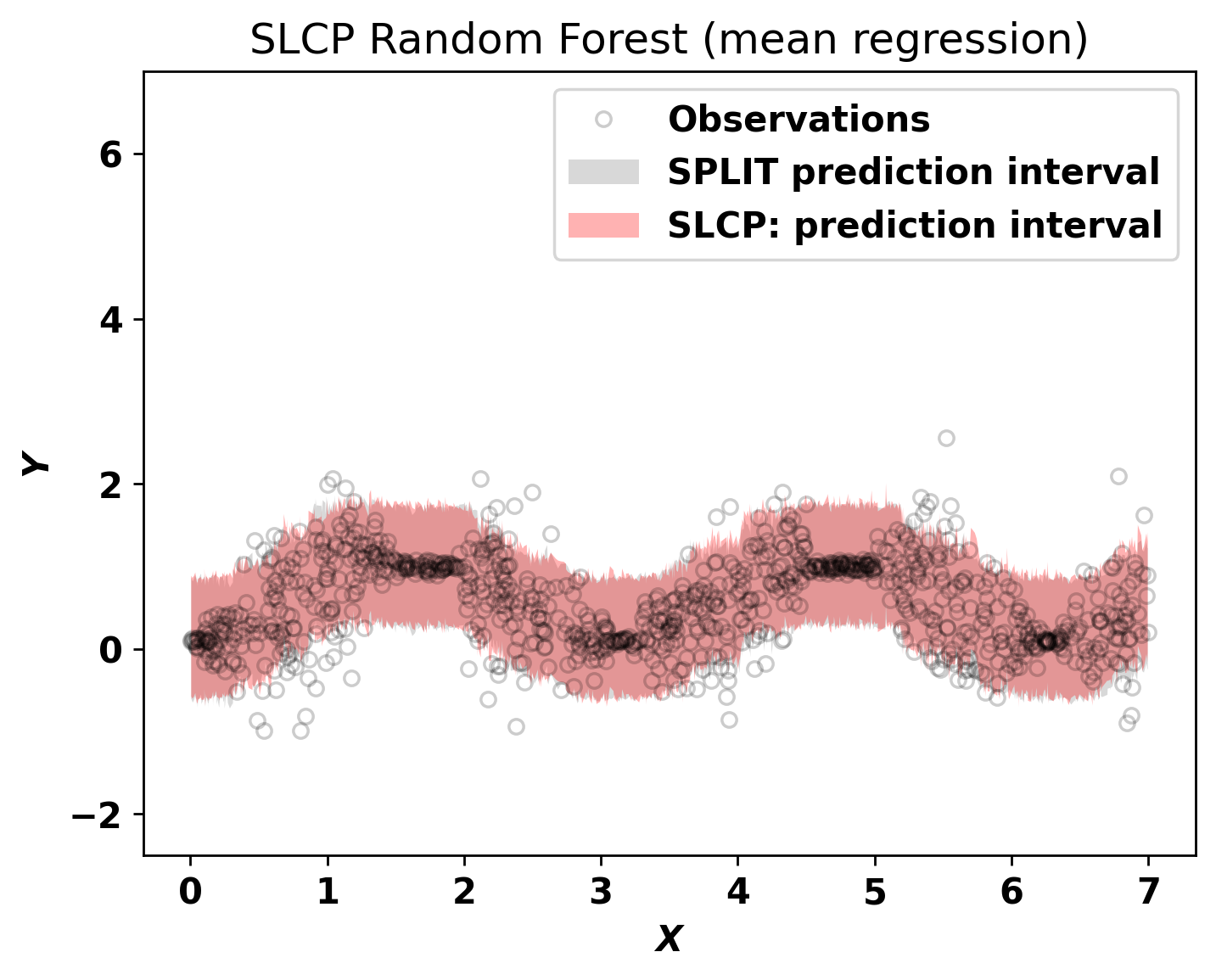}\label{chap7:fig:slcp}}
\subfigure{\includegraphics[width=0.3\textwidth]{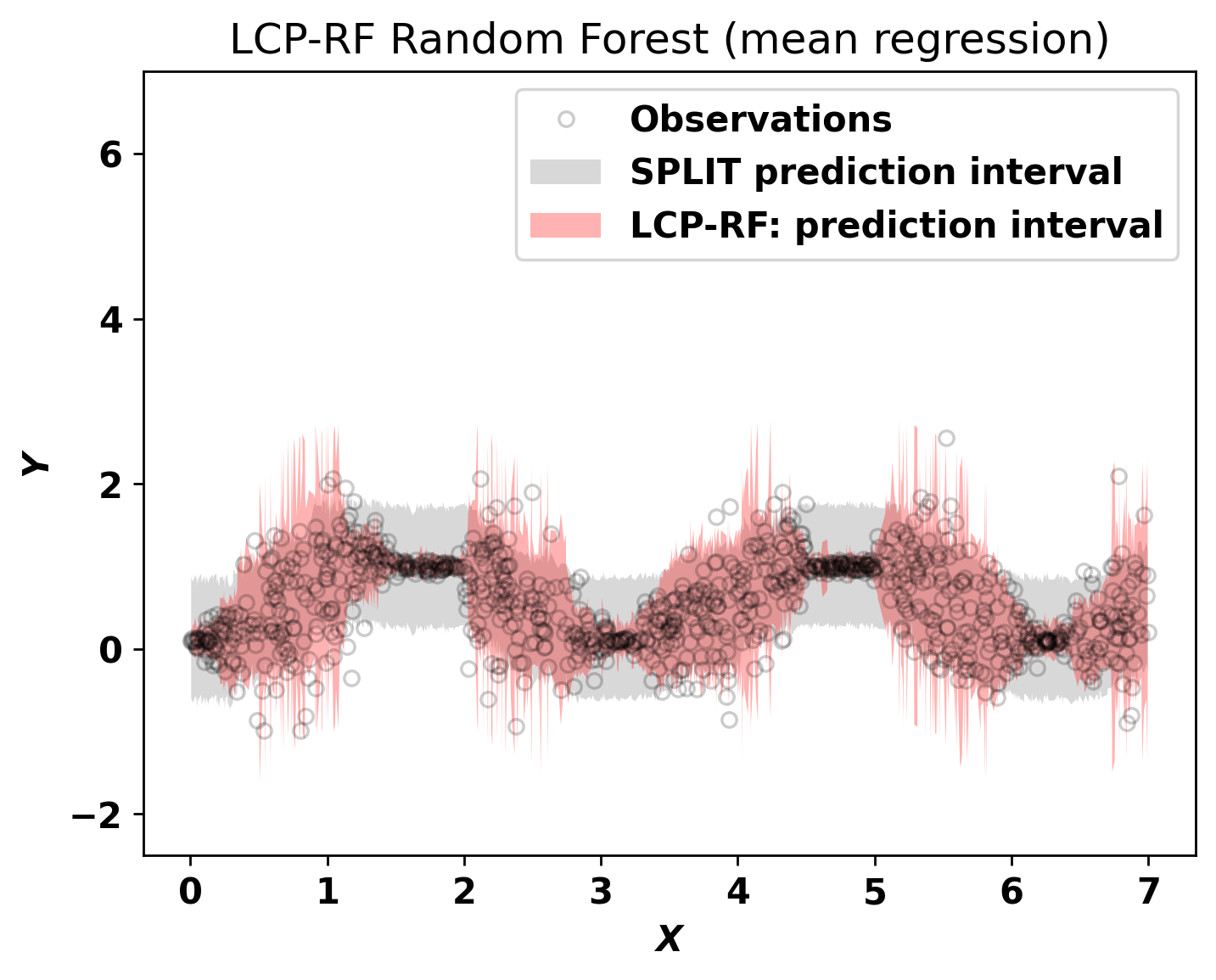}\label{chap7:fig:lcp_rf}}
\caption{Predictive interval at level $1-\alpha$ of SLCP, LCP and LCP-RF of a random Forest fitted on toy model $(\X, Y), \X \in [0, 7]^{21}$ and the target is defined as $Y = \sin(X_1)^2 + 0.1 + 0.6 \times \epsilon \times \sin(2X_1)$ with $\epsilon \sim \mathcal{N}(0, 1)$, and $X_i \sim \mathcal{U}(0, 7)$ for all $i \in[21].$}\label{chap7:fig:toy_demo1}
\end{figure*}
Additionally, we show that the learned weights of the RF can be used to create a relevant partition or groups/clusters of the input space. This allows for a more efficient computation of the LCP calibration and also allows for groupwise conformalization to give a more adaptive PI. In practice, it is often desirable to have a stronger coverage guarantee than marginal coverage. Consequently, we propose a further calibration step such that our PI satisfies training-conditional coverage. We also show that it achieves conditional coverage guarantee under suitable assumptions. 

An active area of research involves using a better nonconformity score to provide an adaptive prediction interval considering the variability of $\small Y|\X=\x$. Several methods have been proposed such as Conformal Quantile Regression (CQR) \citep{romano2019conformalized}, which uses score functions based on estimated quantiles, Locally Adaptive Split Conformal methods \citep{Lei2016DistributionFreePI} which use a scaled residual, and \cite{izbicki20a} proposed using the estimated conditional density as the conformity score. These methods incorporate different nonconformity scores $\small \hV(\cdot)$ that are better suited for handling the variability of $\small Y$. However, the extracted residuals $\hV_i$ of these nonconformity scores still depend on and vary according to $\small \X_{n+1}$. These methods are not competing with the LCP-RF approach as it can be applied to them to improve their PIs.


The main contributions of this paper are: (1) Developing an adaptive PI that better represents the uncertainty of a given model $\small \widehat{\mu}$ by using a QRF to learn the conditional distribution of the residuals $\small \hV(\X, Y)$, and utilizing the LCP framework to calibrate the resulting PI for marginal coverage, (2) Introducing a calibration step to achieve training-conditional coverage, (3) Exploiting the structure of the weights of the QRF to create groups for more adaptive PI and efficient computation through groupwise conformalization, (4) Showing that our methods achieve asymptotic conditional coverage under suitable conditions, (5) Demonstrating through simulations and real-world datasets that our methods outperform competitors LCP and SLCP, and providing a \href{https://github.com/salimamoukou/ACPI}{Python package} for the methods.

    



\section{Random Forest Localizer}

In this section, we present the Random Forest Localizer for constructing adaptive PI that depends on the test point $\small \Xtest$. The approach utilizes the learned weights of the RF and assigns higher weights to calibration samples that have residuals $\small \hV_i$ similar to $\small \hV_{n+1}$. This is based on the RF algorithm's ability to partition the input space by recursively splitting the data, resulting in similar observations with respect to the target variable (here the residuals) within each leaf node of the trees. The basic idea of the trees of the RF is to partition the input space into cells such that $\small Var(\hV(\X, Y) \;|\;\X \in A) \approx 0$ in each cell $\small A$. The corresponding weight of each calibration sample for $\small \Xtest$ is determined by the number of times it appears in the leaves of the trees where $\small \Xtest$ falls.

Random Forest (RF) is an ensemble learning method that utilizes the bagging principle \citep{breiman1996bagging} to combine k randomized trees derived from the CART algorithm \citep{breiman1984classification}. Each tree is constructed using a random sample of training data with replacement, and the best split at every node is identified by optimizing the CART-criterion among a random subset of variables. The predictions from all trees are then averaged to produce the final output of the forest. The Random Forest estimator can also be seen as an adaptive neighborhood procedure \citep{lin2006random}. Let assume we have trained the RF on $\small \mathcal{D}_n$, then for every instance $\boldsymbol{x}$, the observations in $\small \mathcal{D}_n$ are  weighted by $\small w_n(\boldsymbol{x}, \X_i)$, $i=1, \dots, n$. Therefore, the prediction of Random Forests and the weights can be rewritten as $\small m(\boldsymbol{x}, \Theta_{1},\dots, \Theta_{k}, \mathcal{D}_n) = \sum_{i=1}^{n} w_n(\boldsymbol{x}, \X_i) Y_{i}$ and 
\begin{align*}
   \small w_n(\boldsymbol{x}, \X_i) =  \sum_{l=1}^{k}  \frac{B_n(\X_i; \Theta_l) \; \mathds{1}_{\X_{i} \in  A_n(\boldsymbol{x}; \;\Theta_l)}}{k\times N_n(\boldsymbol{x};\; \Theta_l)}
\end{align*}where $\small \Theta_l, l=1, \dots, k$ are independent random vectors that represent the observations that are used to build each tree, i.e. the bootstrap samples, and the random subset of variables used in each node. $\small A_n(\boldsymbol{x}; \;\Theta_l)$ is the tree cell (leaf) containing $\small \boldsymbol{x}$, $\small N_n(\boldsymbol{x};\; \Theta_l)$ is the number of bootstrap elements that fall into $\small A_n(\boldsymbol{x}; \;\Theta_l)$, $\small B_n(\X_i; \;\Theta_l)$ is the number of times $\small \X_i$ has been chosen from the training data, and $\small w_n(\boldsymbol{x}, \X_i)$ represents the average number of times $\small \X_i$ appears in the same leaves as $\small \x$.

Random Forests can be used to estimate more complex quantities, such as cumulative hazard function \citep{ishwaran2008random}, treatment effect \citep{wager2017estimation}, and conditional density \citep{du2021wasserstein}. Quantile Regression Forests proposed by \cite{meinshausen2006quantile} use the same weights $\small w_n(\boldsymbol{x}, \X_i)$ as Random Forests to approximate the c.d.f $\small F(y |\X =\x)$ as:
\begin{equation}
     \small \widehat{F}(y |\boldsymbol{x}) = \sum_{i=1}^{n} w_n(\boldsymbol{x}, \X_i) \mathds{1}_{Y_i \leq y} \label{chap7:eq:estimator_boostrap_quantile}
\end{equation}
\textbf{Random Forest Localizer.} To approximate the estimated residuals $\hV | \X = \x$, we propose to fit a Quantile Regression Forest $\widehat{F}(\cdot|\x)$  on the calibration data $\widehat{\mathcal{D}}_n = \{(\X_i, \hV_i)\}_{i=1}^n$, and the estimator is defined as $\small \widehat{F}(r|\x) = \sum_{i=1}^{n + 1} w_n(\boldsymbol{x}, \X_i) \mathds{1}_{\hV_{i} \leq r}$, where $\hV_{n+1}=+\infty$ unless specified. It's worth noting that this estimator is slightly different from (\ref{chap7:eq:estimator_boostrap_quantile}), as it includes the observation $\Xtest$ in the weighted sum. We will see later that this addition would be essential to prove the marginal coverage property of our method. Using this estimator, a natural PI for $\hV_{n+1}$ is:
\begin{align} \label{chap7:eq:ci_v0}
   \small C_V(\X_{n+1}) = \left\{v: v \leq \mathcal{Q}\left(1 - \alpha;\; \widehat{F}(\cdot|\Xtest)\right) \right\}
\end{align} 
Recall that we obtain the prediction interval $\small C(\Xtest)$ for $\small Y_{n+1}$ by inverting the nonconformity score $\hV(\Xtest, \cdot)$ using $C_V(\X_{n+1})$ as in equation (\ref{chap7:eq:predictive_set}). Thus, the real quantity of interest is $\small C_V(\Xtest)$. The question at hand is whether the PI $\small C_V(\Xtest)$ defined in (\ref{chap7:eq:ci_v0}) satisfies marginal coverage. If $\small w_n(\Xtest, \X_i) = \frac{1}{n+1}$, we have $\small \mathcal{Q}\left(1 - \alpha;\; \widehat{F}(\cdot|\Xtest) \right) = \hV_{(\lceil (1-\alpha)(n + 1) \rceil)}$ and thanks to the quantile lemma  \citep{tibshirani2019conformal} and exchangeability of the $\hV_i$, we have the marginal coverage. However, If $\small \widehat{F}(\cdot | \Xtest)$ gives non-equal weights to the calibration samples, it is no longer the case. Recent methods have been proposed by \cite{tibshirani2019conformal} and \cite{barber2022conformal} that achieve marginal coverage when using reweighting. However, these methods cannot be applied to calibrate our PI, as they work under different assumptions. The method introduced by \cite{barber2022conformal} assumes that the weights do not depend on the data, while the method proposed by \cite{tibshirani2019conformal} handles data-dependent weights but assumes a covariate shift, where the training and test data have different input distributions but the same conditional distribution $P_{Y | \X}$.

To calibrate our PI, we use the Localized Conformal Prediction (LCP) framework \citep{guanlocalizer} to select an appropriate level $\alphat$ of the quantile used in the PI (\ref{chap7:eq:ci_v0}) to ensure marginal coverage at level $1-\alpha$. Hence,  the PI becomes
\begin{align} \label{chap7:eq:pi_a}
    \small C_V(X_{n+1}) = \left\{v: v \leq \mathcal{Q}\left(\alphat;\; \widehat{F}(\cdot|\Xtest)\right) \right\}.
\end{align}


\section{Weighted Conformal Prediction}
In this section, we give a comprehensive overview of the LCP framework of \cite{guanlocalizer} with the Random Forest Localizer for completeness. Additionally, we describe our calibration approach that guarantees training-conditional coverage, and how we leverage the weights of the RF to improve the LCP calibration process and produce more adaptive prediction intervals. For ease of reading, we follow \cite{guanlocalizer} and introduce $\small \mathcal{F}_i = \widehat{F}(\cdot | \X_i) = \sum_{j=1}^{n} w_n(\X_i, \X_j) \mathds{1}_{\hV_j \leq \cdot} + w_n(\X_i, \X_{n+1}) \mathds{1}_{\hV_{n+1} \leq \cdot}$
as the estimated c.d.f of $\small \hV$ given $\small \X_i$ given by the Quantile Regression Forest. As $\hV_{n+1}$ is not observed and we need to consider the possible values of $\small \hV_{n+1}$  for  constructing the PI, we introduce the additional notations  $\small \mathcal{F}_i^v$ for the c.d.f when $\small \hV_{n+1} = v$ if $v$ is finite, and  $\small \mathcal{F} = \mathcal{F}_{n+1}^{\infty}$ if $\small \hV_{n+1} = +\infty$. 

\subsection{Localized Conformal Prediction \citep{guanlocalizer}}
The following lemma is the cornerstone of the LCP framework. It shows how to achieve marginal coverage by properly selecting the level $\alphat$ of the quantile of the localizer.
\begin{lemma} \label{chap7:theorem:lcp}
    Let $\alphat$ be the smallest value in $\small \Gamma = \left\{ \sum_{j=1}^{k} w_n(\X_i, \X_j): i=1, \dots, n+1;\; k=1, \dots, n+1\right\}$ such that \begin{equation*}
        \small  \sum_{i=1}^{n+1}\frac{1}{n+1} \mathds{1}_{\hV_i \leq \mathcal{Q}(\alphat;\;  \mathcal{F}_i )} \geq 1 - \alpha,
    \end{equation*}
then $\small P^{n+1}\left(\hV_{n+1} \leq \mathcal{Q}(\alphat; \; \mathcal{F}_{n+1})\right) \geq 1-\alpha$, or equivalently $\small P^{n+1}\left(\hV_{n+1} \leq \mathcal{Q}(\alphat; \; \mathcal{F})\right) \geq 1-\alpha$.
\end{lemma}
Remember that both $\small \alphat$ and $\small \mathcal{F}_{n+1}$ depends on $\small \widehat{\mathcal{D}}_{n} = \{(\X_i, \hV_{i})\}_{i=1}^{n}$ and $\small (\Xtest, \hV_{n+1})$, but we won't specify them for clarity. 
Now, we can use Lemma \ref{chap7:theorem:lcp} to test $\small H_0: \hV_{n+1} = v$ for each $\small v \in \overline{\mathbb{R}}$ under exchangeability, then invert the test to construct the PI. $\small C_V(\Xtest)$ consists of all values $v$ that are not rejected by this test.  The resulting PI has marginal coverage as shown in the following theorem.
\begin{theorem} \label{chap7:lemma:lcp}
    At $\small \hV_{n+1} = v$, let define $\small \alphat$ that depends on $\small \widehat{\mathcal{D}}_n$ and $\small (\Xtest, v)$ to be the smallest value $\small \alphat \in \Gamma$ such that 
    \begin{equation*}
        \small \sum_{i=1}^{n + 1} \frac{1}{n+1} \mathds{1}_{\hV_i \leq \mathcal{Q}(\alphat; \; \mathcal{F}^v_i) } \geq 1 - \alpha.
    \end{equation*}
    Set $\small C_V(V_{n+1}) = \left\{ v: v \leq \mathcal{Q}\left(\alphat; \; \mathcal{F}\right)\right\}$ and $\small C(Y_{n+1}) = \left\{ y: \hV(\Xtest, y) \leq \mathcal{Q}\left(\alphat; \; \mathcal{F}\right)\right\}$, then by construction, Lemma \ref{chap7:theorem:lcp} gives $\small P^{n+1}\left(Y_{n+1} \in C(\Xtest)\right) = P^{n+1}\left(\hV_{n+1} \in C_V(\Xtest)\right) = P^{n+1}\left(\hV_{n+1} \leq \mathcal{Q}\left(\alphat; \; \mathcal{F}\right)\right) \geq 1 - \alpha$.
\end{theorem}
At this point, the LCP  method is not practical as it requires computing $\small \alphat$ for every possible value of $\small v \in \overline{\mathbb{R}}$ in order to construct the prediction interval. This process can be extremely time-consuming and computationally intensive. However, \cite{guanlocalizer} shows that the computation of $\small C_V(\Xtest)$ can be done efficiently thanks to its interesting properties. Specifically, if $\small v$ is accepted in $\small C_V(\Xtest)$, all $\small v^\prime \leq v$ are also accepted. Hence, it is sufficient to find the largest accepted value $v^\star$. Additionally, as $\small \mathcal{Q}(\alphat; \;\mathcal{F}_i^v)$ is non-decreasing in both $\small \alphat$ and $\small v$, and piece-wise constant in $\small \alphat$, with value changes only occurring at different $\small \hV_i$, it can be proven that the largest value is attained by one of the estimated residuals $\small \hV_{k^\star}$ with $\small k^\star \in [n]$. Therefore, the closure $\small \bar{C}_V(\Xtest)$ of $\small C_V(\Xtest)$ is given by $\small \bar{C}_V(\Xtest)= \left\{v: v \leq \hV_{k^\star} \right\}$ for some $\small k^\star \in [n]$. The following Lemma shows how to find $V_{k^\star}$.



\begin{lemma} \label{chap7:eq:lemma_sk}
    We denote $\small \hV_{(1)}, \dots, \hV_{(n)}$ the order statistics of the nonconformity score of the calibration samples, set $\small \hV_{(n+1)} = +\infty$, and $\small \tilde{\theta}_k = \sum_{i=1}^{n} w_n(\Xtest, \X_{i}) \mathds{1}_{\hV_{i} < \hV_{(k)}}$. Let  $\small k^\star \in \{1, \dots, n+1\}$ the largest index such that \begin{equation} \label{chap7:eq:lemma_lcp}
        \small S(k) := \sum_{i=1}^{n}\frac{1}{n+1} \mathds{1}_{\hV_i \leq \mathcal{Q}\left(\tilde{\theta}_{k^\star};\; \mathcal{F}_i^{\hV_{(k^\star)}}\right)} < \alpha.
    \end{equation}
Then,  $ \bar{C}_V(\Xtest)= \left\{v: v \leq \hV_{(k^\star)} \right\}$ is the closure of $\small C_V(\Xtest)$.
\end{lemma} \cite{guanlocalizer} also proposed an algorithm that computed $\small S(k)$ in $\small \mathcal{O}(n \log(n))$ time. The description of the algorithm can be found in the original paper.

\subsection{Training-Conditional coverage for LCP-RF}

 Here, we consider training-conditional coverage or PAC predictive interval guarantees for the LCP-RF. Let's consider the coverage rate given a calibration set $\mathcal{D}_n$ as $\small \cov(\mathcal{D}_n) = P(\hV_{n+1} \in C_V(\Xtest) \; \vert \; \mathcal{D}_n)$ where the probability is taken with respect to the test point $(\Xtest, \hV_{n+1})$. The PAC predictive interval ensures that for most draws of the calibration samples $\mathcal{D}_n \sim P^n$, we have $\cov(\mathcal{D}_n) \geq 1 - \alpha$. Formally, $\exists \delta$ s.t. $P^n\left(\cov(\mathcal{D}_n) \geq 1 - \alpha \right) \geq 1 - \delta$.

We use a two-step approach to ensure training-conditional coverage for the LCP-RF. First, we use a portion of the calibration samples to ensure marginal coverage by applying the LCP approach. Next, we use a separate portion of the calibration samples to learn a correction term, which is then added to the LCP-RF approach to ensure training-conditional coverage. This approach is similar to the one used in \citep{kivaranovic2020adaptive}. We split the calibration set $\small \widehat{\mathcal{D}}_n$ into two sets $\small \widehat{\mathcal{D}}^i_{n_i} = \left\{(\X^i_1, \hV^i_1), \dots, (\X^i_{n_i}, \hV^i_{n_i}) \right\}$ for $\small i=1, 2$  with $\small n_1 + n_2=n$. We train the Quantile Regression Forest on $\small \widehat{\mathcal{D}}^1_{n_1}$, and compute PI for the observations in the second set $\small \widehat{\mathcal{D}}^2_{n_2}$ using the LCP-RF. The PI of each $\small i \in \widehat{\mathcal{D}}^2_{n_2}$ is $\small C_V(\X^2_i) = \left\{v: v \leq \mathcal{Q}(\alphat(\X^2_i);\; \mathcal{F}^{2, \infty}_i) \right\}$, where $\small \alphat(\X^2_i)$ is the adapted level $\small \alphat$ to have marginal coverage if $\small \X^2_i$ is the test point and $\small \mathcal{F}^{2, \infty}_i = \sum_{j=1}^{n_1 + 1} w_n(\X^2_i, \X^1_j) \mathds{1}_{\hV^1_j \leq V}$ is the estimated residual distribution learn on $\small \mathcal{D}^1_{n_1}$ evalued on $\small \X^2_i$ where we set $\small \X^1_{n_1 + 1} = \X^2_i$ and $\small \hV^1_{n_1 + 1} = +\infty$. The following lemma shows how we can correct the corresponding $\small \alphat(\X^2_i)$ by adding a correction term $\small \textcolor{red}{\widehat{\alpha}}$ to ensure PAC coverage.
\begin{theorem}\label{chap7:theo:pac_interval}
    Let $ \epsilon>0$,  $\alpha - \epsilon >0$ and $ \textcolor{red}{\widehat{\alpha}}$ be the smalest value in $T = \{\alpha_0=0,\dots, \alpha_K=\alpha\}$  s.t.
    \begin{align}
         \sum_{i=1}^{n_2} \frac{1}{n_2} \mathds{1}_{\hV^{2}_{i} \leq  \mathcal{Q}\left( \alphat(\X^{2}_{i}) + \textcolor{red}{\widehat{\alpha}};\; \mathcal{F}^{2, \infty}_{i}\right)} \geq 1 - \alpha.
    \end{align}
    Then, we have  $ P^{n_1}\left\{ \cov(\mathcal{D}_{n_1}) \geq 1 - \alpha - \epsilon \right\} \geq 1 - \delta$ with $\delta=K \exp(-2n_2\epsilon^2)$ and $ \cov(\mathcal{D}_{n_1}) = P \left\{ \hV_{n+1} \leq \mathcal{Q}\left(\alphat(\X_{n + 1}) + \textcolor{red}{\widehat{\alpha}};\; \mathcal{F}^{\infty}_{n+1}\right)\;\big|\; D_{n_1}\right\}$.
\end{theorem}
\textbf{Remark.} This result is valid under the i.i.d assumption and not under exchangeability as the other results of the paper. We suggest choosing a grid $T \subset [0, \alpha]$ as we have observed in most practical scenarios that $\alphat(\X_{n + 1}) \approx 1 - \alpha$.  However, the central idea remains unaltered - to select a grid that enables transitioning from $\alphat(\X_{n+1})$ to 1. Additionally, as $\alphat(\X_{n + 1}) + \textcolor{red}{\widehat{\alpha}}$ may be above $1$, we define $\alphat(\X_{n + 1}) + \textcolor{red}{\widehat{\alpha}} := (\alphat(\X_{n + 1}) + \textcolor{red}{\widehat{\alpha}}) \lor 1$. 
\subsection{Clustering using the weights of LCP-RF}
In this section, we analyze the weights of the Random Forest Localizer and show that it offers several benefits compared to traditional kernel-based localizer. These benefits include faster computation and more adaptive PIs. One key difference between the RF localizer and kernel-based localizer is that the RF localizer's weights are sparse, i.e., many weights being zero. For a given test point $\small \Xtest$, if $\small w_n(\Xtest, \X_i) = 0$, then the estimated c.d.f $\small \mathcal{F}_i $ does not depend on the value of $\small \hV_{n+1}$. Thus, it may not be necessary to use  $\mathcal{F}_i$ in the LCP's marginal calibration (eq. \ref{chap7:eq:lemma_lcp} in Lemma \ref{chap7:eq:lemma_sk}).

The weights defined by the Random Forest Localizer have a structure that can be utilized to group similar observations together before applying the calibration steps. Indeed, we can view the weights of the RF on the calibration set as a transition matrix or a weighted adjacency matrix $\small G$ where $\small G_{i, j} = w_n(\X_i, \X_j)$ and $\small \forall j \in [n],$ we have $\sum_{i=1}^{n} w_n(\X_j, \X_i) = \sum_{i=1}^{n} w_n(\X_i, \X_j) = 1.$

To exploit this structure, we propose to group observations that are connected to each other and separate observations that are not connected. This can be done by considering the connected components of the graph represented by the matrix $G$. Assume that $G$ has $L$ connected components represented by the disjoint sets of vertices $\small G_1, \dots, G_L$, defined such that for any $\small \X_i, \X_j \in G_l$, there is a path from $\small \X_i$ to $\small \X_j$, and they are connected to no other vertices outside the vertices in $\small G_l$. This leads to the existence of a partition of the input space $\small \cup_{i=1}^L R_i = \mathcal{X}$, where $\small \forall k, l \in [L], R_l \cap R_q = \varnothing$, and for all $\small \X_i \in R_p, X_j \in R_q$, we have $\small w_n(\X_i, \X_j) = 0$. The regions $\small R_i$ is defined as $\small R_i = \big\{\x\in \mathbb{R}^d: \exists \X \in G_i, w_n(\x, \X) > 0 \text{ and } \forall \X^\prime \in G_k, k\neq l, \; w_n(\x, \X^\prime)=0 \big\}$.
By definition of the weights, we can also define $R_i$ using the leaves of the RF as $\small R_i = \bigcup_{\X_i \in G_i}\Big[  \cup_{l=1}^{k} A_n(\X_i, \Theta_l)\Big]$.
This shows that the $R_i$ are connected space. Hence, we can apply the conformalization steps separately on each group and use only the observations that are connected to the test point. By using the conformalization by group, we reduce the computation of $S(k)$ in Lemma \ref{chap7:eq:lemma_sk} needed for the computation of the PI from $\small \mathcal{O}(n \log(n))$ to $\small \mathcal{O}\left(\big|\textbf{R}(\Xtest)\big| \log\big|\textbf{R}(\Xtest)\big|\right)$, since we only use the observations in the region where $\Xtest$ belongs in the calibration step. This results in a more accurate and efficient PI. In addition, no coverage guarantees are lost as the $R_i$ forms a partition. We prove the marginal coverage of the group-wise LCP-RF in the appendix.

In some cases, the graph may have a single connected component. Consequently, we propose to regroup calibration observations by (non-overlapping) communities using the weights of the RF. This involves grouping the nodes (calibration samples) of the graph into communities such that nodes within the same community are strongly connected to each other and weakly connected to nodes in other groups. Various methods exist for detecting communities in graphs, such as hierarchical clustering, spectral clustering, random walk, label propagation, and modularity maximization. A comprehensive overview of these methods can be found in  \citep{Schaeffer2007SurveyGC}. Nonetheless, it is challenging to determine the most suitable approach as the selection depends on the particular problem and characteristics of the graph. In our experiments, we found that the popular Louvain-Leiden \citep{traag2019louvain} method coupled with Markov Stability \citep{delvenne2010stability} is effective in detecting communities of the learned weights of the Random Forest. However, any clustering method can be used depending on the specific application and dataset.

Let's assume a graph-clustering algorithm $C$ that returns L disjoint clusters $C(\mathcal{D}_n) = \{ C_1, \dots, C_L \}$. Note that contrary to connected components, we can have $\X \in C_i, \; \X^\prime \in C_j$ and $w_n(\X, \X^\prime) \neq 0$, therefore it's more difficult to define the associated regions $R_1, \dots R_L$ that form a partition of $\mathcal{X}$ s.t. for any $\X \in C_i$, then $\X \in R_i$. We define $R_i$ as the set of points $\x$ that assigns the highest weights to the observations in cluster $C_i$. As $w$ can be interpreted as a transition matrix, we define $R_i$ as the set of $\X$ such that $ \widehat{p}(\X \in C_i) > \widehat{p}(\X \in C_k)$ for all $k\neq i$, where the probability is computed using the weights of the forest. Formally, $R_i$ can be represented as 
\begin{equation*}
    \small R_i = \left\{ \x \in \mathbb{R}^d: \sum_{j \in C_i} w_n(\x, \X_j) > \sum_{j \in C_k} w_n(\x, \X_j),\; k\neq i\right\}.
\end{equation*}
However, we also need to define another set for observations that are "undecidable", i.e., belong to several groups at the same time. We define this set as $\small \bar{R} =\left\{ \x \in \mathbb{R}^d: \exists k, l \in [L], \sum_{j \in C_l} w_n(\x, \X_j) = \sum_{j \in C_k} w_n(\x, \X_j)\right\}$. Finally, we get marginal/PAC coverage as we do with the connected components case by applying the calibration step conditionally on the groups $\small R_1, \dots, R_L$ and $\bar{R}$.

\section{Asymptotic conditional coverage}
Here, we study the conditional coverage of LCP-RF. It is widely recognized that obtaining meaningful distribution-free conditional coverage is impossible with a finite sample \citep{Lei2014DistributionfreePB, vovk2012conditional}. Below, we demonstrate the asymptotic conditional coverage of LCP-RF while making weaker assumptions than the original LCP based on kernel \citep{guanlocalizer}.

\begin{assumption} \label{chap7:prop:assym1}
$\small \forall r \in \mathbb{R}$, the c.d.f $\x \rightarrow \small F(r | \X=\x)$ is continuous.
\end{assumption}
Assumption \ref{chap7:prop:assym1} is necessary to get uniform convergence of the RF estimator.
\begin{assumption} \label{chap7:prop:assym2}
For $l \in [k]$, the variation of the conditional cumulative distribution function within any cell goes to $0$, i.e., $\small \forall \x \in \mathbb{R}^d, \forall r \in \mathbb{R}, \sup_{\boldsymbol{z} \in A_n(\boldsymbol{x}; \;\Theta_l)} |F(r | \boldsymbol{\boldsymbol{z}}) - F(r|\boldsymbol{x})| \overset{a.s}{\to} 0$.
\end{assumption}
Assumption \ref{chap7:prop:assym2} allows for control of the approximation error of the RF estimator. \cite{scornet2015consistency} show that this is true when the data come from additive regression models \citep{additiveStone}, and \cite{elie2020random} show that it holds for a more general class, such as product functions or sums of product functions. This result also applies to all regression functions, with a slightly modified version of RF, where there are at least a fraction $\gamma$ observations in child nodes and the number of splitting candidate variables is set to 1 at each node with a small probability. Therefore, we do not need to assume that for all $r$,  $F(r|.)$ is Lipschitz, as required in LCP \citep{guanlocalizer}, which is a much stronger assumption.

\begin{assumption} \label{chap7:prop:assym3}
Let $\small k$ and the number of bootstrap observations in a leaf node $\small N_n(\boldsymbol{x};\; \Theta_l)$, s.t. there exists $\small k = \mathcal{O}(n^\alpha)$, with $\small \alpha > 0$, and $\small \forall \x \in \mathbb{R}^d$, $ N_n(\boldsymbol{x};\; \Theta_l) = \Omega\footnote{$\footnotesize f(n) = \Omega(g(n)) \iff \exists k > 0, \exists n_0 >0, \forall n \geq n_0, |f(n)| \geq |g(n)|.$ }(\sqrt{n} (ln (n))^\beta)$, with $\small \beta > 1$ a.s. 
\end{assumption}
Assumption \ref{chap7:prop:assym3} allows us to control the estimation error and means that the cells should contain a sufficiently large number of points so that averaging among the observations is effective. It can be enforced by adjusting the hyperparameters of the RF.

Under these assumptions, we prove that the selected $\small \alphat$ when $\small \hV_{n+1} = v$ given by the LCP-RF converges to $\small 1-\alpha$,  and the resulting PI achieves the target level $\small 1-\alpha$.
\begin{theorem} \label{chap7:theo:lcp_cond}
    Let $\small \alphat$ and $\small C_V(\Xtest)$ define as in Theorem \ref{chap7:lemma:lcp}. Under assumptions \ref{chap7:prop:assym1}-\ref{chap7:prop:assym3}, we have $\small \lim_{n \rightarrow \infty} P\left( \hV_{n+1} \in C_V(\Xtest) \; |\;  \Xtest\right) = 1-\alpha$ and for all $\small \epsilon > 0$, $\small \lim_{n \rightarrow \infty} P\left( \max_v|\alphat - (1-\alpha)| < \epsilon\; |\;  \Xtest\right) = 1$ a.s.
\end{theorem}

\section{Experiments}
We evaluate the performance of our proposed methods: LCP-RF (Random Forest Localizer with marginal and training-conditional calibration), LCP-RF-G (LCP-RF with groupwise calibration) and QRF-TC (Random Forest Localizer with only training-conditional calibration) against their competitors SPLIT (split-CP), SLCP and LCP. We used the original implementation of SLCP and LCP and tuned the kernel widths as described in their respective papers. We test the methods on simulated data with heterogeneous output and 3 real-world datasets from UCI \citep{uci_dataset}: bike sharing demand (bike, n=10886, p=12), california house price (cali, n=20640, p=8), and community crime (commu, n=1993, p=128). The datasets are divided into three sets, namely the training set $(40\%)$, calibration set $(40\%)$, and the test set $(20\%)$. To ensure that the model's error is not constant across all observations, we created a hole in the data by removing all observations from the training set whose output exceeds the 0.7-quantile of the training outputs.  The PI is computed on the test sets at a level of $1-\alpha=0.9$.

We consider two nonconformity scores: mean score $\small \hV(\X, Y) = |Y - \widehat{\mu}(\X)|$ where $ \widehat{\mu}$ is mean estimate, and quantile score $\small \hV^Q(\X, Y)=\max\left\{\widehat{q}_{\alpha/2}(\X)-Y, Y-\widehat{q}_{1-\alpha/2}(\X)\right\}$ where $\small \{\widehat{q}_{\alpha/2}, \widehat{q}_{1-\alpha/2} \}$ are quantile estimates at level $\small \alpha/2$ and $\small 1 - \alpha/2$ respectively. We use XGBoost \citep{chen2016xgboost} of scikit-learn \citep{pedregosa2011scikit} with default parameters as the mean estimate $\small \widehat{\mu}$ in our experiments. We leave the analysis of different models and the quantile score for the appendix. We denote $\small C^{m}(\Xtest) = \big[\widehat{\mu}(\Xtest) \pm q^{m}(\Xtest)\big]$ the PI of each method $m$, and the oracle PI as $\small C^{\star}(\Xtest) = [\widehat{\mu}(\Xtest) \pm q^\star(\Xtest)]$ where $\small q^\star(\Xtest) = \mathcal{Q}\left(1-\alpha; \; F_{\hV_{n+1}| \Xtest}\right)$.
\begin{figure*}[ht!]
\centering
\subfigure[sim (lengths)]{\includegraphics[width=0.197\textwidth]{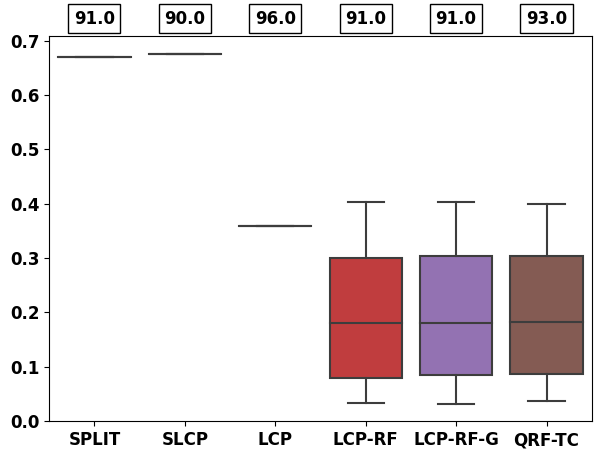}\label{chap7:fig:lengths_sim}}
\subfigure[bike (lengths)]{\includegraphics[width=0.21\textwidth]{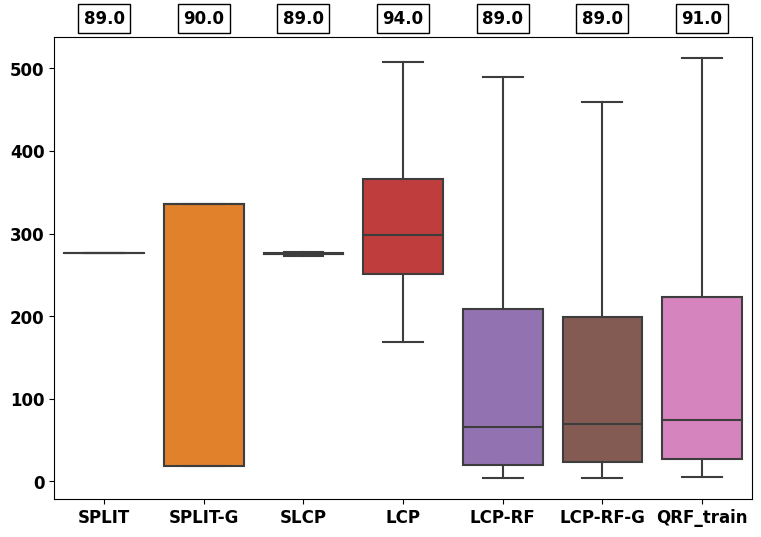} \label{chap7:fig:bio_lengths}}
\subfigure[cali (lengths)]{\includegraphics[width=0.21\textwidth]{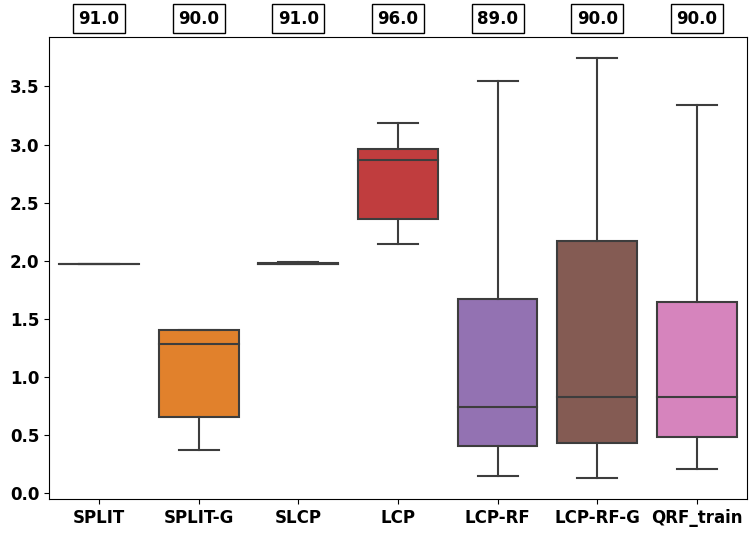}}
\subfigure[commu (lengths)]{\includegraphics[width=0.21\textwidth]{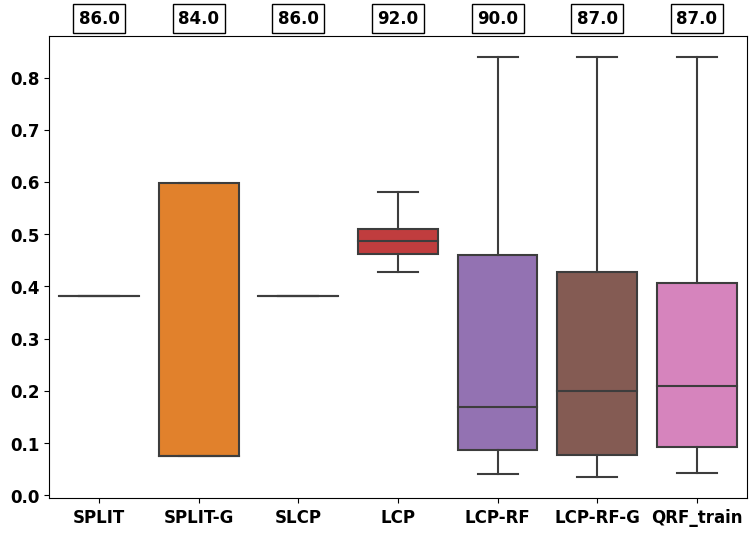}\label{chap7:fig:star_lengths}}
\subfigure[sim ($ err_{n+1}$)]{\includegraphics[width=0.197\textwidth]{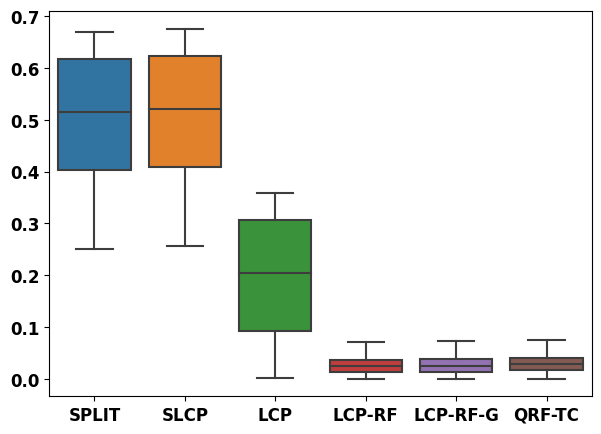}\label{chap7:fig:residuals_sim}}
\subfigure[bike ($\widehat{err}_{n+1}$)]{\includegraphics[width=0.21\textwidth]{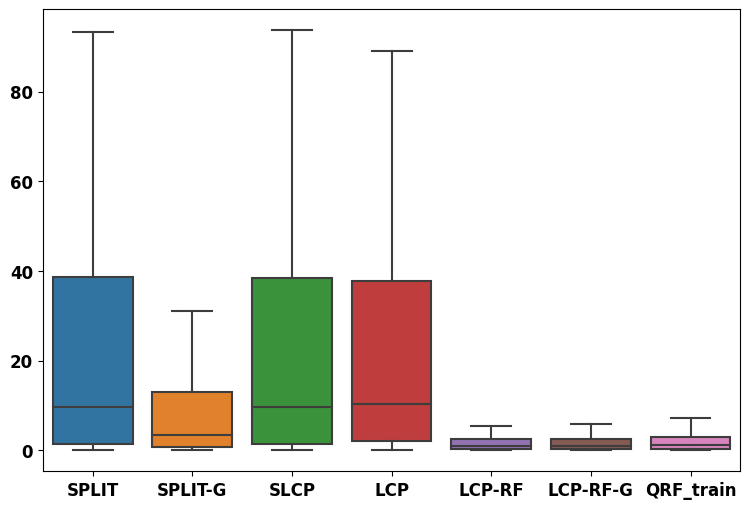}\label{chap7:fig:bio_residuals}}
\subfigure[cali ($\widehat{err}_{n+1}$)]{\includegraphics[width=0.21\textwidth]{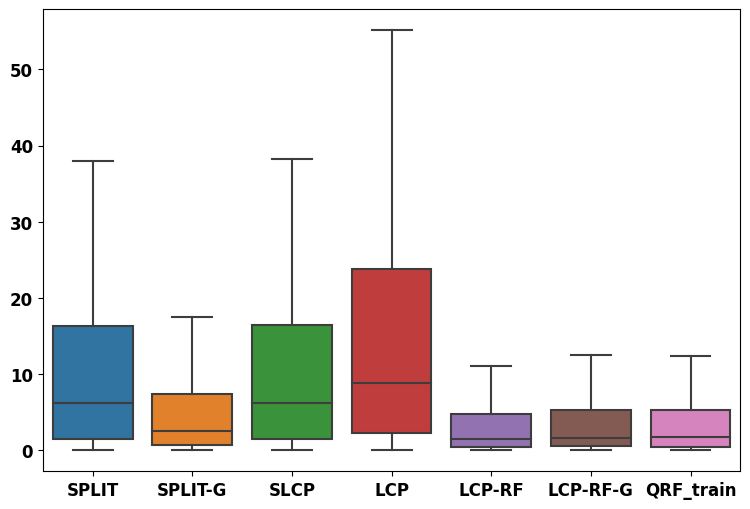}}
\subfigure[commu ($\widehat{err}_{n+1}$)]{\includegraphics[width=0.21\textwidth]{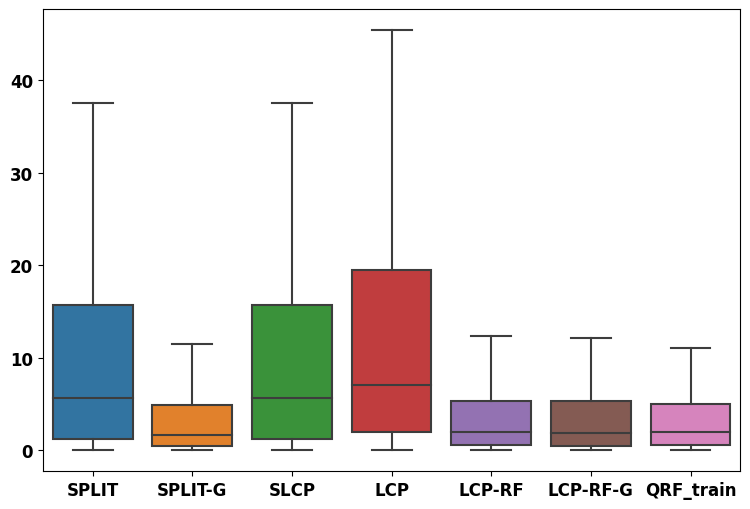}\label{chap7:fig:star_residuals}} 
\caption{PI lengths and errors of the different methods with the mean score. The training-conditional coverages are at the top of the figure.}
\label{chap7:fig:realdata_results}
\end{figure*}

The simulated data (sim) is defined as: $\small \X \in [0, 1]^{50}$, $\X_i \sim \mathcal{U}([0, 1])$ for all $i \in [50]$ and $\small Y = \X_1 + \epsilon \times \X_1/(1 +\X_1)$ where $\small \epsilon \sim \mathcal{N}(0, 1)$. In figure \ref{chap7:fig:residuals_sim}, we compute the absolute relative distance between the PI of each method and the oracle PI as $\small err_{n+1} = |q^m(\Xtest) - q^\star(\Xtest)|/q^\star(\Xtest)$ showing that our methods are much closer to the oracle PI than its competitors. SLCP and SPLIT are close, but they are less accurate than LCP. Figure \ref{chap7:fig:lengths_sim} shows that most methods provide training-conditional coverage or empirical coverage over the test points at nearly $90\%$. Our methods give varied intervals while the others have almost constant intervals.

The analysis of real-world data is more challenging because we don't have the oracle PI. To evaluate the effectiveness of the methods, we compare the length of the PI $\small q^m(\Xtest)$ to the true error of the model $\small \hV_{n+1} = \hV(\Xtest, Y_{n+1})$.  Indeed, a larger error of the model should result in a larger PI. Note that if $\small Y_{n+1}|\Xtest$ does not vary too much then $\small \hV_{n+1} \approx q^\star(\Xtest)$. We denote $\small \widehat{err}_{n+1} = |q^m(\Xtest) - \hV_{n+1}|/\hV_{n+1}$ as the model's fidelity errors. We also introduce a new method (SPLIT-G), corresponding to groupwise split-CP using the groups defined by the RF's weights.

Figure \ref{chap7:fig:realdata_results} summarizes the results on the 3 real-world datasets. Starting with average coverage (top of the figure), most methods have empirical coverage at nearly exact nominal levels for all datasets. Our methods are slightly lower, which could be explained by the sample splitting used for the PAC interval calibration. Indeed, the bound in Theorem \ref{chap7:theo:pac_interval} depends on the size of the data and as we split the calibration set in two, we lose a bit in statistical efficiency.

The top figures (\ref{chap7:fig:bio_lengths}-\ref{chap7:fig:star_lengths}) display the distribution of the lengths of the PI, while the four figures at the bottom (\ref{chap7:fig:bio_residuals}-\ref{chap7:fig:star_residuals}) show the distribution of fidelity errors of the model $\widehat{err}_{n+1}$. Overall, our methods significantly outperform the others in terms of the model's uncertainty fidelity and adaptiveness of lengths. SLCP fails to provide any significant improvement over the standard split-CP. This could be due to the fact that it learns the localizer on the residuals of the training set, which may not represent the residuals of the calibration data, thereby leading to overfitting. While LCP-RF-G and QRF-TC are faster than LCP-RF, their performance are similar. In these datasets, we suspect that the RF localizer is so accurate that it is difficult to distinguish between the groupwise LCP-RF and the LCP-RF. However, we observe that by using the groups defined by the RF with the split-CP (SPLIT-G), we were able to improve the PI of split-CP. This demonstrates how our methods can improve the performance of any CP approach by just utilizing the weights established by the RF.

\section{Conclusion}
In this work, we have significantly enhanced the applicability of the Localized Conformal Prediction framework, which previously only worked on simple models with fewer than five variables, by adapting it for high-dimensional scenarios, accommodating categorical variables, and providing a PAC coverage guarantee. Our reweighting strategy based on the Random Forest algorithm can improve the PI computed using any nonconformity score. This results in more adaptive PI with marginal, training-conditional, and conditional coverage, making Conformal Predictive Intervals more similar to those produced by traditional statistics. This may ease their interpretation in terms of risks and give a clearer relationship between the length of the PI and the uncertainties of a given model $\widehat{\mu}$, thereby allowing for a better understanding of the limitations of $\widehat{\mu}$. 

\bibliography{References}
\bibliographystyle{plainnat}

\newpage 
\appendix

\begin{center}
    \huge \textbf{Supplementary Materials}
\end{center}
\hrulefill

\tableofcontents
\addtocontents{toc}{\protect\setcounter{tocdepth}{2}}

\newpage
\section{Proof of Lemma \ref{chap7:lemma:lcp}}
The Lemma \ref{chap7:lemma:lcp}, which is the cornerstone of the LCP framework, shows how to achieve marginal coverage by properly selecting the level $\alphat$ of the quantile of the localizer.

\begin{lemma} \label{suptheorem:lcp}
    Let $\alphat$ be the smallest value in $\Gamma = \left\{ \sum_{j=1}^{k} w_n(\X_i, \X_j): i=1, \dots, n;\; k=1, \dots, n\right\}$ such that 
    \begin{align} \label{eq:lcp_marg_sup}
        \sum_{i=1}^{n+1}\frac{1}{n+1} \mathds{1}_{\hV_i \leq \mathcal{Q}(\alphat;\;  \mathcal{F}_i )} \geq 1 - \alpha,
    \end{align}
\end{lemma}

then $\mathbb{P}\Bigl\{\hV_{n+1} \leq \mathcal{Q}(\alphat; \; \mathcal{F}_{n+1})\Bigl\} \geq 1-\alpha$, or equivalently $\mathbb{P}\Bigl\{\hV_{n+1} \leq \mathcal{Q}(\alphat; \; \mathcal{F})\Bigl\} \geq 1-\alpha$.

It is important to keep in mind that both $\alphat$ and $\mathcal{F}_{n+1}$ depends on $\widehat{\mathcal{D}}_{n} = \left\{\widehat{Z}_1, \dots, \widehat{Z}_{n} \right\}$ and $(\Xtest, \hV_{n+1})$ where $\widehat{Z}_i = (\X_i, \hV_i)$, but we will not specify them for ease of reading.

\begin{proof}
Let define the event $E_{n+1} = \left\{\widehat{Z}_1 = \widehat{z}_1, \dots, \widehat{Z}_{n+1} = \widehat{z}_{n+1}\right\}$ where $\widehat{Z}_i = (\X_i, \hV_{i})$ and $\widehat{z}_i = (\x_i, \widehat{v}_i) \in \mathcal{X} \times \mathbb{R}$. The exchangeability of the residuals implies that $\hV_{n+1}|E_{n+1}$ is uniform on the set  $\left\{\widehat{v}_1, \dots,\widehat{v}_{n+1}\right\}$, and
\begin{align*}
        P^{n+1}\left\{ \hV_{n+1} \leq \mathcal{Q}\left(\alphat;\; \mathcal{F}_{n+1}\right) \;\Big|\; E_{n+1}\right\} & = \sum_{i=1}^{n+1} P^{n+1}(\widehat{V}_{n+1} = \widehat{v}_i \;|\; E_{n+1}) \mathds{1}_{\widehat{v}_i \leq \mathcal{Q}(\alphat; \; \mathcal{F}_i) } \\
        & = \sum_{i=1}^{n+1} \frac{1}{n+1} \mathds{1}_{\widehat{v}_i \leq \mathcal{Q}(\alphat; \; \mathcal{F}_i) } \geq 1 - \alpha \quad \text{(by construction,  eq. \ref{eq:lcp_marg_sup})}
    \end{align*}

By marginalizing over the event $E_{n+1}$, we have $\mathbb{P}\Bigl\{\hV_{n+1} \leq \mathcal{Q}(\alphat; \; \mathcal{F}_{n+1})\Bigl\} \geq 1-\alpha$. Additionally, we can remove the dependence on the unknown residuals $\hV_{n+1}$ using the well-known fact that $\hV_{n+1} \leq \mathcal{Q}(\alphat; \; \mathcal{F}_{n+1}) \iff \hV_{n+1} \leq \mathcal{Q}(\alphat; \; \mathcal{F})$ (see lemma 1 of \citep{tibshirani2019conformal}] or Lemma A.1 of \citep{guanlocalizer}] for proof). Thus, we also have  $\mathbb{P}\Bigl\{\hV_{n+1} \leq \mathcal{Q}(\alphat; \; \mathcal{F})\Bigl\} \geq 1-\alpha$.
\end{proof}

We refer to the original paper \cite{guanlocalizer} for the proof of Theorem \ref{chap7:lemma:lcp} and Lemma \ref{chap7:eq:lemma_sk}.

\section{Proof of Theorem \ref{chap7:theo:pac_interval}: Training-conditional of LCP-RF}
In this section, we prove Theorem \ref{chap7:theo:pac_interval} that shows how to correct the LCP-RF approach to have training-conditional coverage.

\begin{theorem}
    Let $ \epsilon>0$,  $\alpha - \epsilon >0$ and $ \textcolor{red}{\widehat{\alpha}}$ be the smalest value in $T = \{\alpha_0=0,\dots, \alpha_K=\alpha\}$  s.t.
    \begin{align}
         \sum_{i=1}^{n_2} \frac{1}{n_2} \mathds{1}_{\hV^{2}_{i} \leq  \mathcal{Q}\left( \alphat(\X^{2}_{i}) + \textcolor{red}{\widehat{\alpha}};\; \mathcal{F}^{2, \infty}_{i}\right)} \geq 1 - \alpha.
    \end{align}
    Then, we have  $ P^{n_1}\Bigg\{ \cov(\mathcal{D}_{n_1}) \geq 1 - \alpha - \epsilon \Bigg\} \geq 1 - \delta$ with $\delta=K \exp(-2n_2\epsilon^2)$ and $ \cov(\mathcal{D}_{n_1}) = P \left\{ \hV_{n+1} \leq \mathcal{Q}\left(\alphat(\X_{n + 1}) + \textcolor{red}{\widehat{\alpha}};\; \mathcal{F}^{\infty}_{n+1}\right)\;\Big|\; D_{n_1}\right\}$.
\end{theorem}
\textbf{Remark.} This result is valid under the i.i.d assumption and not under exchangeability as the other results of the paper. We suggest choosing a grid $T \subset [0, \alpha]$ as we have observed in most practical scenarios that $\alphat(\X_{n + 1}) \approx 1 - \alpha$.  However, the central idea remains unaltered - to select a grid that enables transitioning from $\alphat(\X_{n+1})$ to 1. Additionally, as $\alphat(\X_{n + 1}) + \textcolor{red}{\widehat{\alpha}}$ may be above $1$, we define $\alphat(\X_{n + 1}) + \textcolor{red}{\widehat{\alpha}} := (\alphat(\X_{n + 1}) + \textcolor{red}{\widehat{\alpha}}) \lor 1$.

\begin{proof}
    Recall that here $\alphat$ and $\mathcal{F}^{\infty}_{n+1}$ is a function of $\widehat{\mathcal{D}}_{n_1} = \left\{\widehat{Z}_1, \dots, \widehat{Z}_{n_1} \right\}$ and $\Xtest$ as the RF has been trained on $\widehat{\mathcal{D}}_{n_1}$, but we will not specify $\widehat{\mathcal{D}}_{n_1}$ for ease of reading.
    \begin{align*}
        &P^{n_1}\Bigg\{ P\left\{ \hV_{n+1} \leq \mathcal{Q}\left(\alphat(\Xtest) + \textcolor{red}{\widehat{\alpha}};\; \mathcal{F}_{n+1}^{\infty}\right)\;\Big| D_{n_1}\right\}  \leq 1 - \alpha - \epsilon \Bigg\} \\
        & \leq P^{n_1}\Bigg\{ P\left\{ \hV_{n+1} \leq \mathcal{Q}\left(\alphat(\Xtest) + \textcolor{red}{\widehat{\alpha}};\; \mathcal{F}_{n+1}^{\infty}\right)\;\Big| D_{n_1}\right\}  \leq \sum_{i=1}^{n_2} \frac{1}{n_2} \mathds{1}_{\hV^{2}_{i} \leq  \mathcal{Q}\left( \alphat(\X^{2}_{i}) + \textcolor{red}{\widehat{\alpha}};\; \mathcal{F}^{2, \infty}_{i}\right)} - \epsilon \Bigg\} \\
        & = \mathbb{E} \Bigg[ P^{n_1}\Bigg\{ P\left\{ \hV_{n+1} \leq \mathcal{Q}\left(\alphat(\Xtest) + \textcolor{red}{\widehat{\alpha}};\; \mathcal{F}_{n+1}^{\infty}\right)\;\Big| D_{n_1}\right\}  \leq \sum_{i=1}^{n_2} \frac{1}{n_2} \mathds{1}_{\hV^{2}_{i} \leq  \mathcal{Q}\left( \alphat(\X^{2}_{i}) + \textcolor{red}{\widehat{\alpha}};\; \mathcal{F}^{2, \infty}_{i}\right)} - \epsilon \Bigg\} \Bigg| \mathcal{D}_{n_1}\Bigg\}  \Bigg] \\
        & \leq \sum_{\alpha \in T} \mathbb{E} \Bigg[ P^{n_1}\Bigg\{ P\left\{ \hV_{n+1} \leq \mathcal{Q}\left(\alphat(\Xtest) + \alpha;\; \mathcal{F}_{n+1}^{\infty}\right)\;\Big| D_{n_1}\right\}  \leq \sum_{i=1}^{n_2} \frac{1}{n_2} \mathds{1}_{\hV^{2}_{i} \leq  \mathcal{Q}\left( \alphat(\X^{2}_{i}) + \alpha;\; \mathcal{F}^{2, \infty}_{i}\right)} - \epsilon \Bigg\} \Bigg| \mathcal{D}_{n_1}\Bigg\}  \Bigg] \\
    \end{align*}
    Note that conditionally on $\mathcal{D}_{n_1}$, $\sum_{i=1}^{n_2} \frac{1}{n_2} \mathds{1}_{\hV^{2}_{i} \leq  \mathcal{Q}\left( \alphat(\X^{2}_{i}) + \alpha;\; \mathcal{F}^{2, \infty}_{i}\right)}$ is the average of $n_2$ bernoulli-trial with mean $P\left\{ \hV_{n+1} \leq \mathcal{Q}\left(\alphat(\Xtest) + \textcolor{red}{\widehat{\alpha}};\; \mathcal{F}_{n+1}^{\infty}\right)\;\Big| D_{n_1}\right\}$, therefore we can bound the conditional probability by using Hoeffding's inequality. Finally, we have

    \begin{align*}
        & P^{n_1}\Bigg\{ P\left\{ \hV_{n+1} \leq \mathcal{Q}\left(\alphat(\Xtest) + \textcolor{red}{\widehat{\alpha}};\; \mathcal{F}_{n+1}^{\infty}\right)\;\Big| D_{n_1}\right\}  \leq 1 - \alpha - \epsilon \Bigg\} \\
        & \leq \sum_{\alpha \in T} \mathbb{E} \Bigg[ P^{n_1}\Bigg\{ P\left\{ \hV_{n+1} \leq \mathcal{Q}\left(\alphat(\Xtest) + \alpha;\; \mathcal{F}_{n+1}^{\infty}\right)\;\Big| D_{n_1}\right\}  \leq \sum_{i=1}^{n_2} \frac{1}{n_2} \mathds{1}_{\hV^{2}_{i} \leq  \mathcal{Q}\left( \alphat(\X^{2}_{i}) + \alpha;\; \mathcal{F}^{2, \infty}_{i}\right)} - \epsilon \Bigg\} \Bigg| \mathcal{D}_{n_1}\Bigg\}  \Bigg] \\
        & \leq K \exp(-2 \epsilon^2 n_2)
    \end{align*}
\end{proof}

\section{Proof of marginal coverage of groupwise LCP-RF} \label{sup:group_marginal_coverage}
Here, we show that there is no loss in coverage guarantee when conformalizing by groups. We demonstrate the case of marginal coverage, the groupwise training-conditional is obtained in a similar way.
\begin{theorem} \label{chap7:group_lcp}
 Given a partition of the calibration data $\mathcal{D}_n$ in $G_1, \dots, G_L$ and their associated regions $\textbf{R}= \{R_1, \dots, R_L\}$ defined by the weighted adjacency matrix $w_n(\X_i, \X_j)$ of the RF, we denote $R(\X) \in \textbf{R}$ the region where $\X$ falls. At $\hV_{n+1} = v$, let define $\alphat(v, R(\Xtest))$ to be the smallest value $\alphat \in \Gamma$ such that
    \begin{align} \label{eq:lcp}
        \mathlarger{\sum}_{i \; :\; \X_i \in R(\Xtest)} \frac{1}{|\textbf{R}(\Xtest)|+1} \mathds{1}_{\hV_i \leq \mathcal{Q}(\alphat; \; \mathcal{F}^v_i)}  \geq 1 - \alpha.
    \end{align}
    Set $C_V(\X_{n+1}) = \left\{ v: v \leq \mathcal{Q}\left(\alphat\left(v, R(\Xtest)\right); \; \mathcal{F}\right)\right\}$, then $P^{n+1}\left(\hV_{n+1} \in C_V(\Xtest)\right) \geq 1 - \alpha$.
\end{theorem}

\begin{proof}
    \begin{align*}
        P\left(\hV_{n+1} \in C_V(\Xtest) \right) & = P\left( \hV_{n+1} \leq \mathcal{Q} \left(\alphat(\hV_{n+1}, R(\Xtest)); \; \mathcal{F}) \right) \right) \\
        & = \sum_{l=1}^{L} P(R_l) \; P\left( \hV_{n+1} \leq \mathcal{Q} \left(\alphat(\hV_{n+1}, R(\Xtest); \; \mathcal{F}\right) \;\big|\; \Xtest \in R_l\right) \\
        & \geq \sum_{l=1}^{L} P(R_l) (1 - \alpha) \quad \text{(by lemma \ref{suptheorem:lcp})} \\
        & \geq 1 - \alpha
    \end{align*}
\end{proof}

\section{Proof of Theorem \ref{chap7:theo:lcp_cond}: asymptotic conditional coverage}
Here, we prove the asymptotic conditional coverage of the LCP-RF approach or Theorem \ref{chap7:theo:lcp_cond}. Our primary contribution is lemma \ref{lemma:born}, which enables us to control the weights of the RF and, subsequently, to proceed with \cite{guanlocalizer}'s proof.

\begin{theorem} \label{theo:lcp_cond}
    Let $\alphat$ and $C_V(\Xtest)$ define as in Theorem \ref{chap7:lemma:lcp}. Under assumptions \ref{chap7:prop:assym1}-\ref{chap7:prop:assym3}, for all $\epsilon > 0$, we have $\small\lim_{n \rightarrow \infty} P^{n+1}\left( \hV_{n+1} \in C_V(\Xtest) \; |\;  \Xtest\right) = 1-\alpha$ and $\small lim_{n \rightarrow \infty} P^{n+1}\left( \max_v|\alphat(v) - (1-\alpha)| < \epsilon\; |\;  \Xtest\right) = 1$ a.s.
\end{theorem}

The bootstrap step in Random Forest makes its theoretical analysis difficult, which is why it has been replaced by subsampling without replacement in most studies that investigate the asymptotic properties of Random Forests \citep{scornet2015consistency, wager2017estimation, goehry2020random}. To circumvent this issue, we will use Honest Forest as a theoretical surrogate. Honest Forest is a variation of random forest that is simpler to analyze, and \cite{elie2020random} have shown that asymptotically, the original forest and the honest forest are close a.s. (see Lemma 6.2 in \citep{elie2020random}), thus we can extend the results from the Honest Forest to the original forest.

The main idea is to use a second independent sample $\mathcal{D}_n^\diamond=\{(\X^\diamond_i, Y^\diamond_i)\}_{i=1}^n$. We assume that we have a honest forest \citep{wager2017estimation} of the Quantile Regression Forest $\widehat{F}^\diamond(r | \X = \x, \Theta_1, \dots, \Theta_k, \mathcal{D}_n, \mathcal{D}_n^{\diamond})$, which is a random forest that is grown using $\mathcal{D}_n$, but uses another sample $\mathcal{D}_n^{\diamond}$ (independent of $\mathcal{D}_n$ and $\Theta_{1:k}$) to estimate the weights and the prediction. The Honest QRF is defined as:
\begin{equation*}
    F^{\diamond}(r | \X = \x, \Theta_1, \dots, \Theta_k, \mathcal{D}_n,  \mathcal{D}_n^{\diamond}) = \sum_{i=1}^{n+1}  w_n(\x, \X_j^\diamond)) \mathds{1}_{V^{\diamond i} \leq r}  
\end{equation*}
where $\Xtest^\diamond = \Xtest$ the test observation of interest and
\begin{equation*}
      w_n(\x, \X_j^\diamond) = \frac{1}{k} \sum_{l=1}^{k}  \frac{\mathds{1}_{\X_i^\diamond \in  A_{n}(\x; \; \Theta_l)}}{N^\diamond(A_{n}(\x; \Theta_l))},    
\end{equation*}
and $N^\diamond(A_{n}(\x; \Theta_l))$ is the number of observation of $\mathcal{D}_n^\diamond = \left\{(X_1^\diamond, \hV_1^\diamond) \dots, (X_n^\diamond, \hV_n^\diamond) \right\}\cup(\Xtest, \hV_{n+1})$ that fall into $A_{n}(\x; \; \Theta_l)$. To ease the notations, we do not write $\Theta_1, \dots, \Theta_k, \mathcal{D}_n,  \mathcal{D}_n^{\diamond}$ if not necessary we write $\widehat{F}^{\diamond}(r | \x)$ instead of $ \widehat{F}^{\diamond}(r | \X = \x, \Theta_1, \dots, \Theta_k, \mathcal{D}_n,  \mathcal{D}_n^{\diamond})$.

The following Lemma \ref{lemma:vapnik_boostrap} allows for control of the weights of the Honest Forest.

\begin{lemma} \label{lemma:vapnik_boostrap}
Consider $\mathcal{D}_n, \mathcal{D}_n^\diamond,$ two independent datasets of independent n samples of $(\boldsymbol{X}, Y)$. Build a tree using $\mathcal{D}_n$ with bootstrap and bagging procedure driven by $\Theta$. As before,  $N(A_{n}(\x; \Theta_l))$ is the number of bootstrap observations of $\mathcal{D}_n$ that fall into $A_{n}(\x; \; \Theta_l)$ and $N^\diamond(A_{n}(\x; \Theta_l))$ is the number of observations of  $\mathcal{D}_n^\diamond$ that fall into $A_{n}(\x; \; \Theta_l)$. Then:
\begin{align}
    \forall \epsilon >0, \quad \mathbb{P}\left(\labs N(A_{n}(\x; \Theta_l)) - N^\diamond(A_{n}(\x; \Theta_l)) \rabs > \epsilon\right) \leq 24(n+1)^{2p}e^{-\epsilon^2/288n}
\end{align}
\end{lemma}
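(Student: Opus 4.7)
The plan is to reduce the problem to a triple application of the Vapnik--Chervonenkis uniform concentration inequality. The crucial structural observation is that the random cell $A_n(\x;\Theta_l,\mathcal{D}_n)$, built by axis-aligned CART splits, always belongs to the class $\mathcal{C}$ of axis-aligned hyperrectangles in $\mathbb{R}^p$, whose VC dimension is $2p$. By Sauer--Shelah, the shatter coefficient satisfies $s(\mathcal{C},n)\leq (n+1)^{2p}$, and this bound is independent of the particular tree realization $\Theta_l$ and of $\mathcal{D}_n$. So even though $A_n$ depends on $\mathcal{D}_n$ (and potentially also on the bootstrap draws used inside the tree construction), we can handle it by taking a supremum over $\mathcal{C}$.

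First I would introduce two auxiliary quantities: the population mass $P(A)=\mathbb{P}(\X\in A)$ and the empirical mass $\mathbb{P}_n(A)=\tfrac{1}{n}\sum_{i=1}^n \mathds{1}_{\X_i\in A}$ of the unresampled dataset $\mathcal{D}_n$. The triangle inequality then gives
\begin{align*}
\Bigl|\tfrac{N(A_n)}{n}-\tfrac{N^\diamond(A_n)}{n}\Bigr|
 & \leq \Bigl|\tfrac{N(A_n)}{n}-\mathbb{P}_n(A_n)\Bigr|
 + \bigl|\mathbb{P}_n(A_n)-P(A_n)\bigr| \\
 & \quad {}+ \Bigl|P(A_n)-\tfrac{N^\diamond(A_n)}{n}\Bigr|.
\end{align*}
Hence $\{|N(A_n)-N^\diamond(A_n)|>\epsilon\}$ is contained in the union of the three events obtained by requiring the corresponding summand to exceed $\epsilon/(3n)$, and a union bound finishes the job once each of the three pieces is controlled.

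Each piece is now a standard VC concentration for a class with $s(\mathcal{C},n)\leq(n+1)^{2p}$:
\begin{align*}
\mathbb{P}\Bigl(\sup_{C\in\mathcal{C}}|\mathbb{Q}_n(C)-Q(C)|>u\Bigr)\leq 8(n+1)^{2p}\,e^{-nu^2/32}.
\end{align*}
The middle term is a direct application to the i.i.d.\ sample $\mathcal{D}_n\sim P$, and the third term to the independent i.i.d.\ sample $\mathcal{D}_n^\diamond\sim P$. The first term is a conditional application: given $\mathcal{D}_n$, the bootstrap draws used to form $N(A)$ are i.i.d.\ from $\mathbb{P}_n$, so $N(\cdot)/n$ is their empirical measure, and the VC bound applies with the same shatter coefficient; integrating $\mathcal{D}_n$ out preserves the bound. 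Plugging in $u=\epsilon/(3n)$ yields $8(n+1)^{2p}\exp(-\epsilon^2/(288n))$ per term, and the union bound multiplies the prefactor by $3$, producing the stated $24(n+1)^{2p}e^{-\epsilon^2/(288n)}$.

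The only real subtlety is the coupling between the data-dependent cell $A_n$ and the bootstrap mechanism; this is resolved precisely by the uniform character of the VC bounds, which lets us replace the random set $A_n$ by the supremum over $\mathcal{C}$, combined with the conditioning-on-$\mathcal{D}_n$ step for the bootstrap piece. Everything else amounts to bookkeeping of constants, so I expect no deeper obstacle.
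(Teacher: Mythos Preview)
Your proof is correct and almost certainly coincides with the intended argument. The paper does not prove this lemma itself but simply cites \cite{elie2020random}, Lemma~5.3; the fact that your decomposition reproduces the exact constants in the statement---$24=3\cdot 8$ from the union bound and $288=9\cdot 32$ from $u=\epsilon/(3n)$ in the VC exponent---is strong evidence that the cited proof uses precisely the same triangle-inequality-plus-VC scheme you describe. Your handling of the data-dependent cell via the supremum over axis-aligned rectangles, and of the bootstrap term via conditioning on $\mathcal{D}_n$, is the standard way to make this rigorous.
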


See the proof in \citep{elie2020random}, Lemma 5.3.

The following Lemma is the key element to prove Theorem \ref{theo:lcp_cond} for Random Forest Localizer. 

\begin{lemma} \label{lemma:born}
    Let define for all $i=1, \dots, n$,
    \begin{align*}
        R_i = \sum_{j = 1}^{n} w_n(\X_i, \X_j^\diamond) \left( \mathds{1}_{\hV_j^\diamond < \hV_i} - F(\hV_i | \X^\diamond_j)\right) \quad \text{and} \quad I_i = \sum_{j = 1}^{n} w_n(\X_i, \X_j^\diamond) F(\hV_i | \X^\diamond_j),
    \end{align*}
    then for any $\epsilon >0$, under assumptions \ref{chap7:prop:assym1}-\ref{chap7:prop:assym3}, we have
    \begin{align}
    & P(|R_i|> \epsilon)  \leq  2(1 + 24 k (n+1)^{2p})\exp \left({\frac{K \ln(n)^{\beta}}{576 \sqrt{n}} - \frac{ \epsilon K \ln(n)^{\beta}}{24}}\right) \\
    & I_i \in \Big[F(\hV_i|\X_i) - v(n) - \frac{2 k}{K \sqrt{n} \ln(n)^\beta}, \quad F(\hV_i|\X_i) + v(n)+\frac{2 k}{K \sqrt{n} \ln(n)^\beta}\Big]
\end{align}
where $v(n)$ is a sequence s.t. $v(n) \xrightarrow[n \rightarrow \infty]{} 0$.

    \begin{proof}
        First, let's rewrite $R_i$ as
        \begin{align*}
            R_i = \sum_{j = 1}^{n} w_n(\X_i, \X_j^\diamond) \left( \mathds{1}_{\hV_j^\diamond < \hV_i} - F(\hV_i | \X_j^\diamond)\right) = \sum_{j = 1}^{n} w_n(\X_i, \X_j^\diamond) H_j^\diamond
        \end{align*}
        where $H_j^\diamond$ is bounded by $1$ and $E[H_j^\diamond | \X_j^\diamond, \mathcal{D}_n] = 0$. Then, for all $\epsilon>0$
        \begin{align*}
    P(R_i > \epsilon) & \leq e^{-t\epsilon} \; \mathbb{E}[e^{t R_i}] \\
    & \leq e^{-t\epsilon} \; \mathbb{E}\left[ \prod_{j = 1}^{n}  \mathbb{E} \left[e^{t   w_n(\X_i, \X_j^\diamond) H_j^\diamond} | \Theta_1, \dots, \Theta_k, \mathcal{D}_n, \X^\diamond_i \dots, \X^\diamond_{n} \right] \right] \\ 
    & \leq  e^{-t\epsilon} \; \mathbb{E}\left[ \prod_{j = 1}^{n}  e^{\frac{t^2}{2} w_n(\X_i, \X_j^\diamond)^2} \right]
\end{align*}
The last inequality comes from $w_n(\X_i, \X_j^\diamond)$ being constant given $\Theta_1, \dots, \Theta_k, \mathcal{D}_n, \X^\diamond_i \dots, \X^\diamond_{n}$, and as $H_j^\diamond$ is bounded by 1 with $E[H_j^\diamond | \X_j^\diamond, \mathcal{D}_n] = 0$, we used the following inequality: If $|X| \leq 1$ a.s and $\mathbb{E}[X] = 0$, then $\mathbb{E}[e^{t X}] \leq \mathbb{E}[e^{\frac{t^2}{2}}]$.

Using assumption \ref{chap7:prop:assym3}, there exists $K > 0$ such that $\forall l \in [k]$, $N(A_{n}(\X_i; \Theta_l)) \geq \frac{K \sqrt{n} \ln(n)^\beta}{2}$ a.s., then we have $\Gamma(l) = \{ N^\diamond(A_{n}(\X_i; \Theta_l)) \leq \frac{K \sqrt{n} \ln(n)^\beta}{2}\} \subset \{ |N(A_{n}(\X_i; \Theta_l))-N^\diamond(A_{n}(\X_i; \Theta_l))| \geq \frac{K \sqrt{n} \ln(n)^\beta}{2} \}$. Thus, using Lemma \ref{lemma:vapnik_boostrap}, we have that $\mathbb{P}(\Gamma(l)) \leq 24(n+1)^{2 p}\exp(-\frac{-K^2 (\ln(n)^{2\beta})}{1152})$. 

We have
\begin{align*}
    \sum_{j = 1}^{n} w_n(\X_i, \X_j^\diamond)^2 & = 
    \sum_{j = 1}^{n}  \frac{w_n(\X_i, \X_j^\diamond)}{k} \left( \sum_{l=1}^{k} \frac{\mathds{1}_{\X_j^\diamond \in A_{n}(\X_i; \Theta_l)}}{N^\diamond(A_{n}(\X_i; \Theta_l))} (\mathds{1}_{\{ \Gamma(l)\}} + \mathds{1}_{\{ \Gamma(l)^c\}})\right) \\
    & \leq \sum_{j = 1}^{n}  w_n(\X_i, \X_j^\diamond)^2 \left( \frac{2}{K \sqrt{n} \ln(n)^\beta} + \frac{1}{k} \sum_{l=1}^{k} \mathds{1}_{\X_j^\diamond \in A_{n}(\X_i; \Theta_l, \mathcal{D}_m)} \mathds{1}_{\{ \Gamma(l)\}}\right)
\end{align*}

So that, \begin{align*}
    P(R_i > \epsilon) & \leq \exp(-t\epsilon + \frac{t^2}{K \sqrt{n} \ln(n)^\beta}) \mathbb{E}\left[\exp\left(\frac{t^2}{2} \mathds{1}_{\cup_{l=1}^{k} \Gamma(l)}\right)\right] \\
    & \leq \exp(-t\epsilon + \frac{t^2}{K \sqrt{n} \ln(n)^\beta}) \times \left( 1 + e^{\frac{t^2}{2}} \sum_{l=1}^{k} \mathbb{P}(\Gamma(l)) \right) \\
    & \leq \exp(-t\epsilon + \frac{t^2}{K \sqrt{n} \ln(n)^\beta}) \times \left( 1 + 24 k (n+1)^{2p}\exp \left({\frac{t^2}{2} - \frac{K^2 \ln(n)^{2\beta}}{1152}}\right)  \right)
\end{align*}
Taking $t^2 = \frac{K^2 \ln(n)^{2\beta}}{576}$ leads to 
\begin{align*}
    P(R_i> \epsilon) & \leq  (1 + 24 k (n+1)^{2p})\exp \left({\frac{K \ln(n)^{\beta}}{576 \sqrt{n}} - \frac{ \epsilon K \ln(n)^{\beta}}{24}}\right) 
\end{align*}

We obtain the same bound for $\mathbb{P}(R_i \leq -\epsilon) = \mathbb{P}(-R_i > \epsilon)$, then by using assumption \ref{chap7:prop:assym2}, there exists  $k = \mathcal{O}(n^\alpha)$ so that the right term is finite, we conclude by Borel-Cantelli that $|R_i|$ goes to 0 a.s. Finally, we have  
\begin{align*}
    P(|R_i|> \epsilon) & \leq  2(1 + 24 k (n+1)^{2p})\exp \left({\frac{K \ln(n)^{\beta}}{576 \sqrt{n}} - \frac{ \epsilon K \ln(n)^{\beta}}{24}}\right) 
\end{align*}

Now, we consider $I_i$. By assumption \ref{chap7:prop:assym2}, we have $\forall x \in \mathbb{R}^d, \forall r \in \mathbb{R}, \sup_{\boldsymbol{z} \in A_n(\boldsymbol{x}; \;\Theta_l)} |F(r | \boldsymbol{\boldsymbol{z}}) - F(r|\boldsymbol{x})| \overset{a.s}{\to} 0$ then we can assume that there exists a sequence $v(n) \rightarrow 0$ s.t.  
\begin{align}
    \forall x \in \mathbb{R}^d, \forall r \in \mathbb{R}, \sup_{\boldsymbol{z} \in A_n(\boldsymbol{x}; \;\Theta_l)} |F(r | \boldsymbol{\boldsymbol{z}}) - F(r|\boldsymbol{x})| \leq v(n)
\end{align}
Consequently, 
\begin{align*}
    |I_i - F(\hV_i | \X_i)| & = \big|\sum_{j = 1}^{n+1} w_n(\X_i, \X_j^\diamond)\left( F(\hV_i | \X_j^\diamond) - F(\hV_i | \X_i) \right) - w_n(\X_i, \Xtest) F(\hV_i | \X_{n+1}^\diamond)\big|\\
    & \leq \sum_{j = 1}^{n+1} w_n(\X_i, \X_j^\diamond) \big|F(\hV_i | \X_j^\diamond) - F(\hV_i | \X_i)\big| + w_n(\X_i, \Xtest) \\
    & \leq \sum_{j = 1}^{n+1} w_n(\X_i, \X_j^\diamond)  \sup_{\boldsymbol{z} \in A_n(\X_i; \Theta_l)}\big|F(\hV_i | \boldsymbol{z}) - F(\hV_i | \X_i)\big| + \frac{2 k}{K \sqrt{n} \ln(n)^\beta} \\
    & \leq v(n) + \frac{2 k}{K \sqrt{n} \ln(n)^\beta}
\end{align*}    
We use the fact that by assumption \ref{chap7:prop:assym3}, we can lower bound the weights of the forest since $N(A_{n}(\X_i; \Theta_l)) \geq \frac{K \sqrt{n} \ln(n)^\beta}{2}$ for all $l \in [k]$, we have $ w_n(\X_i, \Xtest) \leq \frac{2 k}{K \sqrt{n} \ln(n)^\beta}$.
\end{proof}
\end{lemma}

\subsection{Proof of Theorem \ref{theo:lcp_cond}}

As in \cite{guanlocalizer}, we first prove that $\alphat \rightarrow 1 - \alpha$ and then show that the resulting PI has coverage rate that achieves the desired level $1 - \alpha$.

\begin{proof}
Let define $R_i = \sum_{j = 1}^{n} w_n(\X_i, \X_j^\diamond) \left( \mathds{1}_{\hV_j^\diamond < \hV_i} - F(\hV_i | \X^\diamond_j)\right)$ and $I_i = \sum_{j = 1}^{n} w_n(\X_i, \X_j^\diamond) F(\hV_i | \X^\diamond_j)$ for all $i=1, \dots, n$, and 
    \begin{align*}
        J_i(v, \alphat) := \left\{ \hV_i \leq \mathcal{Q}(\alphat; \; \mathcal{F}^v_i) \right\} = \left\{ \alphat > \sum_{j \leq n: \hV_j < \hV_i} w_n(\X_i,\Xd_j) + w_n(\X_i, \Xd_{n+1})\mathds{1}_{v < \hV_i}\right\}
    \end{align*}
which is the event in the sum defined in theorem \ref{chap7:lemma:lcp}. Let consider the lower term in $J_i(v, \alphat)$. We have
\begin{multline} \label{eq:ji_born}
     \small R_i + I_i - w_n(\X_i, \Xd_{n+1}) \leq R_i + I_i \leq \sum_{j \leq n: \hV_j < V_i} w_n(\X_i,\Xd_j) + w_n(\X_i, \Xd_{n+1})\mathds{1}_{v < \hV_i}  \\
     \leq R_i + I_i + w_n(\X_i, \Xd_{n+1})
\end{multline}

Let $\epsilon > 0$, and denote $ G = \left\{i \in \{1, \dots, n\}: |R_i| \leq \epsilon \right\}$. By lemma \ref{lemma:born}, we have $ I_i \in \Big[F(\hV_i|\X_i) - v(n) - \frac{2 k}{K \sqrt{n} \ln(n)^\beta}, \quad F(\hV_i|\X_i) + v(n)+\frac{2 k}{K \sqrt{n} \ln(n)^\beta}\Big]$. Using the upper bound of equation \ref{eq:ji_born}, for any $i \in G$, we have
\begin{align}
    J^{down}_i(\alphat) := \left\{\alphat >  F(\hV_i|\X_i) + \epsilon + v(n) + \frac{4 k}{K \sqrt{n} \ln(n)^\beta}\right\} \subseteq J_i(v, \alphat) 
\end{align}
and similary with the lower bound of equation \ref{eq:ji_born}, we have 

\begin{align}
    J^{up}_i(\alphat) := \left\{\alphat >  F(\hV_i|\X_i) - \epsilon - v(n) - \frac{4 k}{K \sqrt{n} \ln(n)^\beta}\right\} \supseteq J_i(v, \alphat) 
\end{align}
Hence, we can upper and lower bound the left side of the equation in Theorem \ref{chap7:lemma:lcp} using $J^{up}_i(\alphat)$ and $J^{down}_i(\alphat)$.
\begin{align}
    & \frac{1}{n+1} \sum_{i=1}^{n+1} J_i(v, \alphat) \leq \frac{1}{n+1} + \frac{1}{n+1} \sum_{i \in G}  J^{up}_i(\alphat) + \frac{|\Bar{G}|}{n+1} \label{eq:1}\\
    & \frac{1}{n+1} \sum_{i=1}^{n+1} J_i(v, \alphat) \geq \frac{1}{n+1} \sum_{i \in G}  J^{down}_i(\alphat)\label{eq:2} 
\end{align}
$W_i = F(\hV_i | \X_i)$ is an i.i.d. uniform distribution as $\hV | \X_i$ is a continuous random variable. Therefore, on the event $\{ |\Bar{G}| = 0 \}$, if $\alphat$ satistfy the marginal coverage of equation of theorem \ref{chap7:lemma:lcp}, then 

\begin{itemize}
    \item By equation \ref{eq:1}, we have
    \begin{multline}
       \small  \frac{1}{n+1}\left(1 + \sum_{i=1}^{n} J^{up}_i(\alphat)\right) \geq 1 - \alpha \\ \implies \alphat \geq \mathcal{Q}\left( \frac{n+1}{n}(1 - \alpha) - \frac{1}{n}; \; \frac{1}{n} \sum_{i=1}^{n} W_i\right) - \epsilon - v(n) - \frac{4 k}{K \sqrt{n} \ln(n)^\beta}
    \end{multline}
    The implication cames from the fact that $\frac{1}{n+1}\left(1 + \sum_{i=1}^{n} J^{up}_i(\alphat)\right) \geq 1 - \alpha$ implies that at lest $\lceil (n+1)(1-\alpha)\rceil - 1$ of the $J^{up}_i(\alphat) := \left\{\alphat >  F(\hV_i|\X_i) - \epsilon - v(n) - \frac{4 k}{K \sqrt{n} \ln(n)^\beta}\right\}$ is true. Assuming $J^{up}_i(\alphat)$ is true, and replacing $W_i = F(\hV_i|\X_i)$  then the order statistics $W_{(\lceil (n+1)(1-\alpha)\rceil -1)}$ should also satisfy the condition by definition. Note that $ \mathcal{Q}\left( \frac{n+1}{n}(1 - \alpha) - \frac{1}{n}; \; \frac{1}{n} \sum_{i=1}^{n} W_i\right) = W_{(\lceil (n+1)(1-\alpha)\rceil -1)}$.
    \item Similary, with equation \ref{eq:2}, we have
    \begin{multline*}
        \frac{1}{n+1}\sum_{i=1}^{n} J^{down}_i(\alphat) \geq 1 - \alpha \\
        \implies \alphat \geq \mathcal{Q}\left( \frac{n+1}{n}(1 - \alpha); \; \frac{1}{n} \sum_{i=1}^{n} W_i\right) + \epsilon + v(n) + \frac{4 k}{K \sqrt{n} \ln(n)^\beta}
    \end{multline*}
\end{itemize}
The maximal deviation between two weights of the forest is $\frac{4 k}{K \sqrt{n} \ln(n)^\beta}$, then there exists $C$ s.t
\begin{align}
    \alphat \leq \mathcal{Q}\left( \frac{n+1}{n}(1 - \alpha); \; \frac{1}{n} \sum_{i=1}^{n} W_i\right) + \epsilon + v(n) + \frac{4 k C}{K \sqrt{n} \ln(n)^\beta}
\end{align}
Therefore, on the event $\{ |\Bar{G}| = 0\}$, we have
\begin{multline*}
  \small \mathcal{Q}\left( \frac{n+1}{n}(1 - \alpha) - \frac{1}{n}; \; \frac{1}{n} \sum_{i=1}^{n} W_i\right) - \epsilon - v(n) - \frac{4 k}{K \sqrt{n} \ln(n)^\beta} \leq  \alphat \\
  \small \leq \mathcal{Q}\left( \frac{n+1}{n}(1 - \alpha); \; \frac{1}{n} \sum_{i=1}^{n} W_i\right) + \epsilon + v(n) + \frac{4 k C}{K \sqrt{n} \ln(n)^\beta}
\end{multline*}
In addition, let $\epsilon^\prime >0$ and consider the event $H =\left\{ \sup_t |\mathcal{Q}(t;\;  \frac{1}{n} \sum_{i=1}^{n} W_i) - t| \leq \epsilon^\prime\right\}$, on this even we have
\begin{multline} 
   \small (1 - \alpha) + \frac{1}{n}(1 - \alpha) - \frac{1}{n}  - \epsilon^\prime - \epsilon - v(n) - \frac{4 kC}{K \sqrt{n} \ln(n)^\beta} \leq  \alphat \\
   \leq (1 - \alpha) + \frac{1}{n}(1 - \alpha) + \epsilon^\prime + \epsilon + v(n) + \frac{4 k C}{K \sqrt{n} \ln(n)^\beta}
\end{multline}
Then, there exists C and $\epsilon$ s.t.
\begin{align}\label{eq:816}
   (1 - \alpha) - \epsilon - v(n) - \frac{4 k C}{K \sqrt{n} \ln(n)^\beta} \leq  \alphat \leq (1-\alpha)  + \epsilon + v(n) + \frac{4 k C}{K \sqrt{n} \ln(n)^\beta}
\end{align}
We can simplify the equation \ref{eq:816} as
\begin{align}
    |\alphat - (1-\alpha)| \leq \epsilon + v(n) + \frac{4 k C}{K \sqrt{n} \ln(n)^\beta}
\end{align}
Finally, we have for any $\hV_{n+1}=v$,

\begin{align*}
 P\left( |\alphat - (1-\alpha)| > \epsilon + v(n) + \frac{4 k C}{K \sqrt{n} \ln(n)^\beta}\right) & \leq P(\Bar{G}) + P(\Bar{H}) 
\end{align*}
Using DKW inequality \citep{massart1990tight} for $\Bar{H}$, and union bound for $\Bar{G}$, we have
\begin{align*}
    & P(\Bar{H}) = P(\sup_t |\mathcal{Q}(t;\; \frac{1}{n}\sum_{i=1}^{n} W_i) - t| > \epsilon^\prime) \leq 2 \exp(-2n\epsilon^{\prime 2}) \\
    & P(\Bar{G}) = P(\exists i \in \{1,\dots, n\}: |R_i| > \epsilon)  \leq n \times  2(1 + 24 k (n+1)^{2p})\exp \left({\frac{K \ln(n)^{\beta}}{576 \sqrt{n}} - \frac{ \epsilon K \ln(n)^{\beta}}{24}}\right)
\end{align*}
Consequently, we have $P\left( |\alphat - (1 - \alpha)| > \epsilon + v(n) + \frac{4 k C}{K \sqrt{n} \ln(n)^\beta}\right) \xrightarrow[n \rightarrow \infty]{} 0$ with $\epsilon + v(n) + \frac{4 k C}{K \sqrt{n} \ln(n)^\beta} \xrightarrow[n \rightarrow \infty]{} 0$ which conclude our proof. 

Now, let's prove that $\lim_{n \rightarrow \infty} P^{n+1}\left( \hV_{n+1} \in C_V(\Xtest) \; |\;  \Xtest\right) = 1-\alpha$ a.s. 

By definition, we have 
\begin{align} \label{eq:3}
\hV_{n+1} \leq \mathcal{Q}(\alphat; \; \mathcal{F}) \iff \sum_{i=1}^n w_n(\Xtest, \Xd_i) \mathds{1}_{\hV_i < \hV_{n+1}} = I_{n+1} + R_{n+1} < \alphat.
\end{align}

Let's denote $G = \{|R_{n+1}| \leq \epsilon \}$ with $\epsilon=\frac{1}{n}$. On the event $G$, we can lower and upper bound the left side of equation \ref{eq:3} using Lemma \ref{lemma:born} as above. As a result, we have: 

\begin{align}
    & I_{n+1} + R_{n+1} \leq F(\hV_{n+1}|\Xtest) + v(n) + \frac{4 k C}{K \sqrt{n} \ln(n)^\beta} + \epsilon \\
    & I_{n+1} + R_{n+1} \geq F(\hV_{n+1}|\Xtest) - v(n) - \frac{4 k C}{K \sqrt{n} \ln(n)^\beta} - \epsilon
\end{align}

Since $F(\hV_{n+1}|\Xtest)$ is an i.i.d uniform distribution, and $P(\Bar{G}) \rightarrow 0$, we have
\begin{align}
    & P(I_{n+1} + R_{n+1} < \alphat) \leq \alphat + v(n) + \frac{4 k C}{K \sqrt{n} \ln(n)^\beta} + \epsilon + P(\Bar{G}) \rightarrow \alphat\\
    & P(I_{n+1} + R_{n+1} < \alphat) \geq \alphat - v(n) - \frac{4 k C}{K \sqrt{n} \ln(n)^\beta} - \epsilon \rightarrow \alphat
\end{align}

As we have shown that $\alphat \rightarrow 1 - \alpha$ for any $\hV_{n+1}=v$ in probability, the LCP-RF achieve the asymptotic conditional coverage at level $1 - \alpha$.
\end{proof}

\newpage 
\section{Additional experiments}

In this section we present additional experiments on real-world datasets. First, we show the lengths and residuals of the PI when $\widehat{\mu}$ is a linear model with $\hV(\X, Y) = |Y - \widehat{\mu}(\X)|$ on bike datasets.

\begin{figure*}[ht!]
\centering
\subfigure[bike (lengths)]{\includegraphics[width=0.3\textwidth]{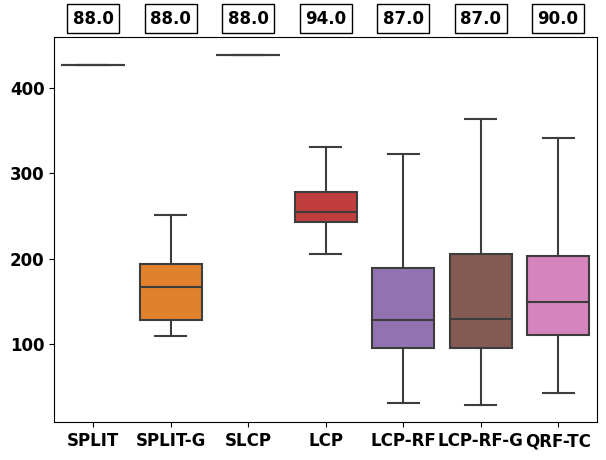}\label{fig:bike_lengths_lin}}
\subfigure[bike ($\widehat{err}_{n+1}$)]{\includegraphics[width=0.3\textwidth]{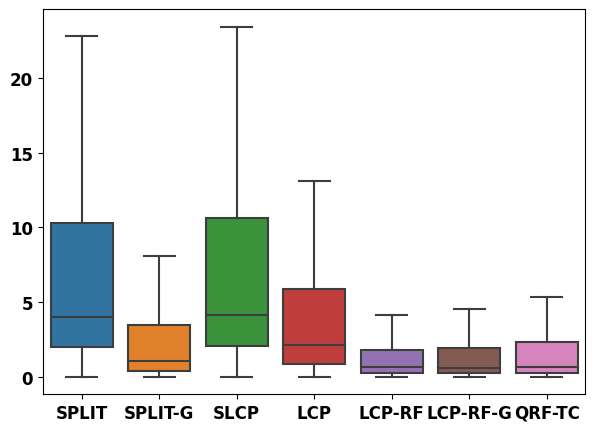} \label{fig:bike_residual_lin}}
\end{figure*}

Now, we run the experiment above on star and bike dataset using quantile score $\small V(\X, Y, \{\widehat{q}_{\alpha/2}, \widehat{q}_{1-\alpha/2} \})=\max\left(\widehat{q}_{\alpha/2}(\X)-Y, Y-\widehat{q}_{1-\alpha/2}(\X)\right)$. We first estimate $\{\widehat{q}_{\alpha/2}, \widehat{q}_{1-\alpha/2} \}$ using quantile linear regression \citep{chernozhukov2010quantile}, then we use Quantile Regression Forest. Note that in this case, split-CP corresponds to Conformalized Quantile Regression \citep{romano2019conformalized}.  

\begin{figure*}[ht!]
\centering
\subfigure[star (lengths) - QLR]{\includegraphics[width=0.22\textwidth]{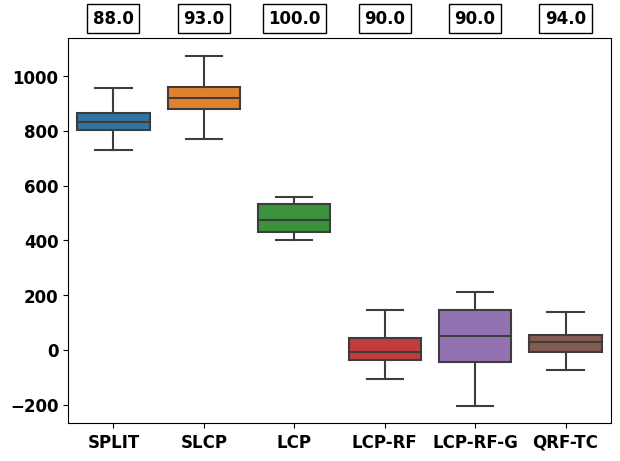}\label{fig:star_lql}}
\subfigure[star (residuals) - QLR]{\includegraphics[width=0.22\textwidth]{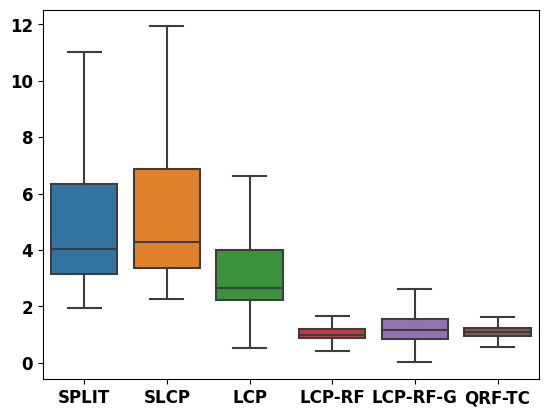} \label{fig:star_lqr}}
\subfigure[bike (lengths) - QLR]{\includegraphics[width=0.22\textwidth]{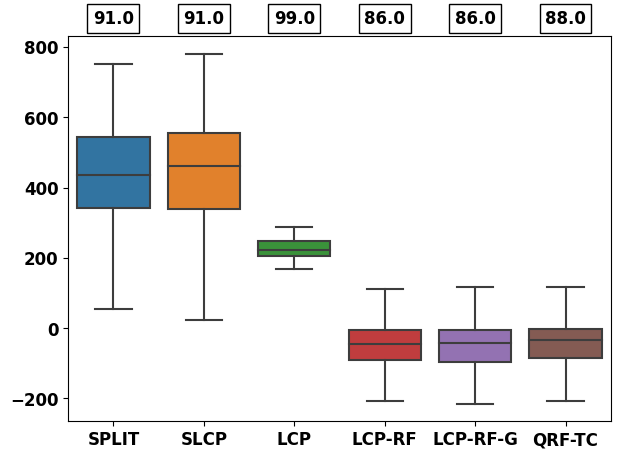}}
\subfigure[bike (residuals) - QLR]{\includegraphics[width=0.22\textwidth]{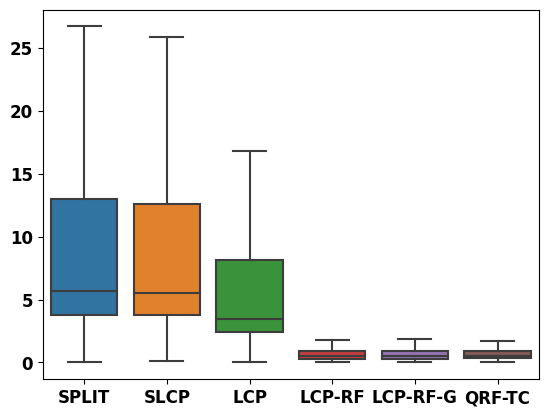}}
\caption{Lengths and errors distribution of quantile score using Quantile Linear Regression (QLR)}
\end{figure*}

We also compute the quantile score using Quantile Regression Forest in the figure below.

\begin{figure*}[ht!]
\centering
\subfigure[star (lengths) - QRF]{\includegraphics[width=0.22\textwidth]{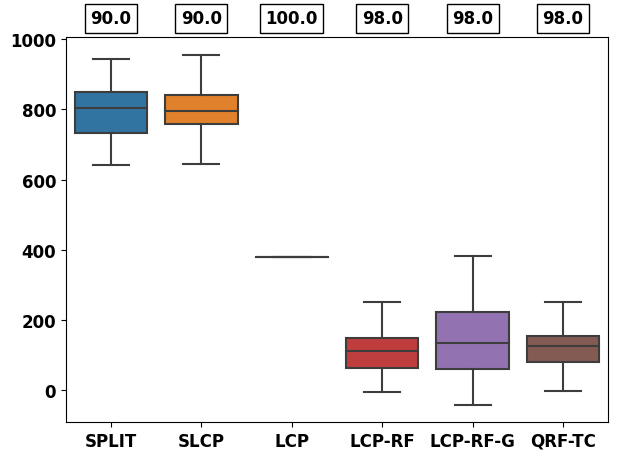}\label{fig:star_lql}}
\subfigure[star (residuals) - QRF]{\includegraphics[width=0.22\textwidth]{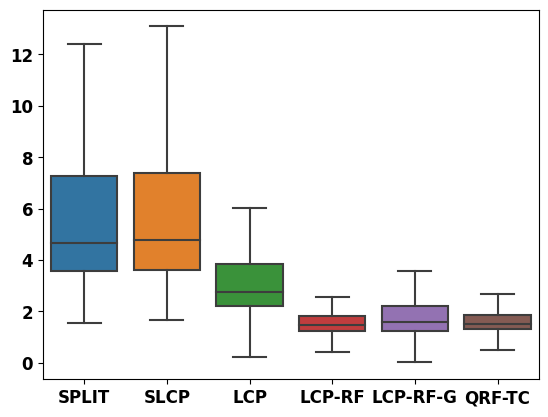} \label{fig:star_lqr}}
\subfigure[bike (lengths) - QRF]{\includegraphics[width=0.22\textwidth]{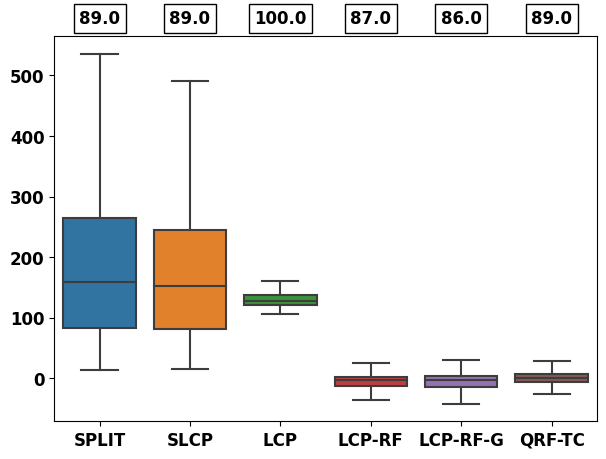}}
\subfigure[bike (residuals) - QRF]{\includegraphics[width=0.22\textwidth]{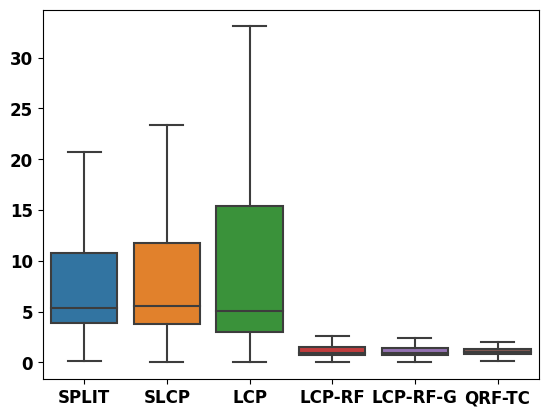}}
\caption{Lengths and errors distribution of quantile score using Quantile Random Forest (QRF)}
\end{figure*}
All these figures show that the Random Forest Localizer performs much better than the other methods.

\end{document}